\def\E{{\mathbb{E}}}
\newtheorem{theorem}{Theorem}
\newtheorem{lemma}{Lemma}
\title{Finite-Time Performance Bounds and Adaptive Learning Rate Selection for Two Time-Scale Reinforcement Learning}
\author{%
  Harsh Gupta
    \\
  ECE and CSL\\
  University of Illinois at Urbana-Champaign\\
  \texttt{hgupta10@illinois.edu}
   \And
   R. Srikant \\
   ECE and CSL \\
   University of Illinois at Urbana-Champaign \\
  \texttt{rsrikant@illinois.edu} \\
   \AND
   Lei Ying \\
   ECEE \\
   Arizona State University \\
   \texttt{lei.ying.2@asu.edu} \\
}
\begin{document}

\maketitle

\begin{abstract}
We study two time-scale linear stochastic approximation algorithms, which can be used to model well-known reinforcement learning algorithms such as GTD, GTD2, and TDC. We present finite-time performance bounds for the case where the learning rate is fixed. The key idea in obtaining these bounds is to use a Lyapunov function motivated by singular perturbation theory for linear differential equations. We use the bound to design an adaptive learning rate scheme which significantly improves the convergence rate over the known optimal polynomial decay rule in our experiments, and can be used to potentially improve the performance of any other schedule where the learning rate is changed at pre-determined time instants. 
\end{abstract}

\section{Introduction}
A key component of reinforcement learning algorithms is to learn or approximate value functions under a given policy \citep{sutton1988learning}, \citep{bertsekas1996neuro}, \citep{szepesvari2010algorithms}, \citep{bertsekas2011dynamic}, \citep{bhatnagar2012stochastic}, \citep{sutton2018reinforcement}. Many existing algorithms for learning value functions are variants of the temporal-difference (TD) learning algorithms \citep{sutton1988learning}, \citep{tsitsiklis1997analysis}, and can be viewed as stochastic approximation algorithms for minimizing the Bellman error (or objectives related to the Bellman error). Characterizing the convergence rate of these algorithms, such as TD(0), TD($\lambda$), GTD , nonlinear GTD has been an important objective of reinforcement learning \citep{szepesvari2010algorithms}, \cite{bhatnagar2009convergent}. The asymptotic convergence of these algorithms with diminishing steps has been established using stochastic approximation theory in many prior works (comprehensive surveys on stochastic approximations can be found in \citep{benveniste2012adaptive}, \citep{kushner2003stochastic}, and \citep{borkar2009stochastic}).

The conditions required for theoretically establishing asymptotic convergence in an algorithm with diminishing step sizes imply that the learning rate becomes very small very quickly. As a result, the algorithm will require a very large number of samples to converge. Reinforcement learning algorithms used in practice follow a pre-determined learning rate (step-size) schedule which, in most cases, uses decaying step sizes first and then a fixed step size. This gap between the theory and the practice has prompted a sequence of works on finite-time performance of temporal difference learning algorithms with either time-varying step sizes or constant step sizes \citep{dalal2017finite,dalal2017finitetwo,lakshminarayanan2018linear,bhandari2018finite,srikant2019finite}. Most of these results are for single time-scale TD algorithms, except \citep{dalal2017finitetwo} which considers two time-scale algorithms with decaying step sizes. Two time-scale TD algorithms are an important class of reinforcement learning algorithms because they can improve the convergence rate of TD learning or remedy the instability of single time-scale TD in some cases. This paper focuses on two time-scale linear stochastic approximation algorithms with constant step sizes. The model includes TDC, GTD and GTD2 as special cases (see \cite{sutton2008convergent}, \cite{sutton2009fast} and \citep{szepesvari2010algorithms} for more details).

Besides the theoretical analysis of finite-time performance of two time-scale reinforcement learning algorithms, another important aspect of reinforcement learning algorithms, which is imperative in practice but has been largely overlooked, is the design of learning rate schedule, i.e., how to choose proper step sizes to improve the learning accuracy and reduce the learning time. This paper addresses this important question by developing principled heuristics based on the finite-time performance bounds. 

The main contributions of this paper are summarized below. 
\begin{itemize}[leftmargin=*]
    \item {\bf Finite Time Performance Bounds:} We study two time-scale linear stochastic approximation algorithms, driven by Markovian samples. We establish finite time bounds on the mean-square error with respect to the fixed point of the corresponding ordinary differential equations (ODEs). The performance bound consists of two parts: a steady-state error and a transient error, where the steady-state error is determined by the step sizes but independent of the number of samples (or number of iterations), and the transient error depends on both step sizes and the number of samples. The transient error decays geometrically as the number of samples increases. The key differences between this paper and \citep{dalal2017finitetwo} include (i) we do not require a sparse projection step in the algorithm; and (ii) we assume constant step sizes which allows us to develop the adaptive step size selection heuristic mentioned next.
    
    \item {\bf Adaptive Learning Rate Selection:} Based on the finite-time performance bounds, in particular, the steady-state error and the transient error terms in the bounds, we propose an adaptive learning rate selection scheme. The intuition is to use a constant learning rate until the transient error is dominated by the steady-state error; after that, running the algorithm further with the same learning rate is not very useful and therefore, we reduce the learning rate at this time. To apply adaptive learning rate selection in a model-free fashion, we develop data-driven heuristics to determine the time at which the transient error is close to the steady-state error. A useful property of our adaptive rate selection scheme is that it can be used with any learning rate schedule which already exists in many machine learning software platforms: one can start with the initial learning rate suggested by such schedules and get improved performance by using our adaptive scheme. Our experiments on Mountain Car and Inverted Pendulum show that our adaptive rate selection significantly improves the convergence rates. 
\end{itemize}

\section{Model, Notation and Assumptions}

We consider the following two time-scale linear stochastic approximation algorithm: 
\begin{align}
\begin{split}
        U_{k+1}&=U_k+\epsilon^\alpha \left({A}_{uu}(X_k)U_k+{A}_{uv}(X_k)V_k +{b}_u (X_k)\right)\\
    V_{k+1}&=V_k+\epsilon^\beta \left({A}_{vu}(X_k)U_k+{A}_{vv}(X_k)V_k +{b}_v(X_k)\right),
\end{split}\label{2sa}
\end{align} where $\{X_k\}$ are the samples from a Markov process. We assume $\beta<\alpha$ so that, over $\epsilon^{-\beta}$ iterations, the change in $V$ is $O(1)$ while the change in $U$ is $O\left(\epsilon^{\alpha-\beta}\right).$ Therefore, $V$ is updated at a faster time scale than $U.$ 

In the context of reinforcement learning, when combined with linear function approximation of the value function, GTD, GTD2, and and TDC can be viewed as two time-scale linear stochastic approximation algorithms, and can be described in the same form as \eqref{2sa}. For example, TDC with linear function approximation is as follows: 
\begin{align*}
U_{k+1}=&U_k+\epsilon^{\alpha}\left(\phi(X_k)- \zeta \phi(X_{k + 1})\right)\phi^\top(X_k) V_k\\
V_{k+1}=&V_k+\epsilon^{\beta}\left(\delta_k -\phi^\top(X_k)V_k\right)\phi(X_k),
\end{align*} where $\zeta$ is the discount factor, $\phi(x)$ is the feature vector of state $x,$ $U_k$ is the weight vector such that $\phi^\top(x) U_k$ is the approximation of value function of state $x$ at iteration $k,$ $\delta_k=c(X_k)+\zeta \phi^\top(X_{k+1})U_k -\phi^\top(X_k)U_k$ is the TD error, and $V_k$ is the weight vector such that $\phi^\top(x) V_k$ is the estimate of the TD error for state $x$ at iteration $k.$

We now summarize the notation we use throughout the paper and the assumptions we make. 
\begin{itemize}[leftmargin=*]
\item  {\bf Assumption 1:} $\{X_k\}$ is a Markov chain with state space $\mathcal{S}$. We assume the following two limits exist: 
\begin{align*}
\begin{pmatrix} \bar{A}_{uu} &  \bar{A}_{uv} \\
  \bar{A}_{vu}&  \bar{A}_{vv}\end{pmatrix} =  &\lim_{k \xrightarrow{}\infty}\begin{pmatrix} \E\left[{A}_{uu}(X_k)\right]& \E\left[{A}_{uv}(X_k)\right] \\
  \E\left[{A}_{vu}(X_k)\right]&  \E\left[{A}_{vv}(X_k)\right]\end{pmatrix} \\ 
  \begin{pmatrix}\bar{b}_u& \bar{b}_v\end{pmatrix}=&\lim_{k \xrightarrow{} \infty}\begin{pmatrix}\E[{b}_u(X_k)]&\E[{b}_v(X_k)]\end{pmatrix}=0. 
\end{align*} Note that without the loss of generality, we assume $\bar{b}=0.$ This can be guaranteed by appropriate centering.   We define 
    \begin{align*}
    B(X_k)=&{A}_{uu}(X_k) - {A}_{uv}(X_k)\bar{A}_{vv}^{-1}\bar{A}_{vu}&\quad 
    \tilde{B}(X_k)= &A_{vu}(X_k) - {A}_{vv}(X_k)\bar{A}_{vv}^{-1}\bar{A}_{vu}\\
    \bar{B}=&\bar{A}_{uu} -\bar{A}_{uv}\bar{A}_{vv}^{-1}A_{vu}&\quad 
    \bar{\tilde{B}}= &\bar{A}_{vu} - \bar{A}_{vv}\bar{A}_{vv}^{-1}\bar{A}_{vu}.
    \end{align*}

    \item {\bf Assumption 2:} We assume that $\max\{\|b_u(x)\|, \|b_v(x)\|\} \leq b_{\max} < \infty$ for any $x\in{\cal S}.$ We also assume that $\max\{\|{B}(x)\|, \|\tilde{B}(x)\|, \|A_{uu}(x)\|, \|A_{vu}(x)\|, \|A_{uv}(x)\|, \|{A}_{vv}(x)\|\} \leq 1$ for any $x\in{\cal S}.$ Note that these assumptions imply that the steady-state limits of the random matrices/vectors will also satisfy the same inequalities.

  \item {\bf Assumption 3:} We assume  $\bar{A}_{vv}$ and $\bar{B}$ are Hurwitz and $\bar{A}_{vv}$ is invertible. Let   $P_u$ and $P_v$ be the solutions to the following Lyapunov equations: 
\begin{align*}
-I &= \bar{B}^\top P_u + P_u \bar{B}\\
-I &= \bar{A}_{vv}^\top P_v + P_v\bar{A}_{vv}.
\end{align*}
Since both $\bar{A}_{vv}$ and $\bar{B}$ are Hurwitz, $P_u$ and $P_v$ are real positive definite matrices.

    \item {\bf Assumption 3:} Define $\tau_{\Delta} \geq 1$ to be the mixing time of the Markov chain $\{X_k\}.$ We assume 
    \begin{align*}
        \|\E[{b}_k|X_0 = i]\| &\leq  \Delta, \forall i, \forall k \geq \tau_{\Delta}\\
        \|\bar{B} - \E[{B}(X_k)|X_0 = i]\| &\leq \Delta, \forall i, \forall k \geq \tau_{\Delta}\\
        \|\bar{\tilde{B}} - \E[\tilde{B}(X_k)|X_0 = i]\| &\leq \Delta, \forall i, \forall k \geq \tau_{\Delta}\\
        \|\bar{A}_{uv} - \E[{A}_{uv}(X_k)|X_0 = i]\| &\leq \Delta, \forall i, \forall k \geq \tau_{\Delta}\\
        \|\bar{A}_{vv} - \E[{A}_{vv}(X_k)|X_0 = i]\| &\leq \Delta, \forall i, \forall k \geq \tau_{\Delta}.
    \end{align*}

    \item {\bf Assumption 4:} As in \citep{srikant2019finite}, we assume that there exists $K \geq 1$ such that $\tau_{\Delta} \leq K\log(\frac{1}{\Delta})$. For convenience, we choose $$\Delta = 2\epsilon^{\alpha}\left(1 + \|\bar{A}_{vv}^{-1}\bar{A}_{vu}\| + {\epsilon^{\beta - \alpha}}\right)$$ and drop the subscript from $\tau_{\Delta}$, i.e., $\tau_{\Delta} = \tau$. Also, for convenience, we assume that $\epsilon$ is small enough such that $\tilde{\epsilon}\tau \leq \frac{1}{4},$ where 
    $\tilde{\epsilon}= \Delta = 2\epsilon^{\alpha}\left(1 + \|\bar{A}_{vv}^{-1}\bar{A}_{vu}\| + {\epsilon^{\beta - \alpha}}\right).$

\end{itemize}

We further define the following notation:
\begin{itemize}[leftmargin=*]
\item Define matrix 
\begin{equation}
P = \begin{pmatrix}
 \frac{\xi_v}{\xi_u + \xi_v} P_u&0\\ 0& \frac{\xi_u}{\xi_u + \xi_v}P_v \end{pmatrix},\label{eq:P}\end{equation} 
where  $\xi_u = 2\|P_u\bar{A}_{uv}\|$ and  $\xi_v = 2\left\|P_v\bar{A}_{vv}^{-1}\bar{A}_{vu}\bar{B}\right\|.$ 

    \item Let $\gamma_{\max}$ and $\gamma_{\min}$ denote the largest and smallest eigenvalues of $P_u$ and $P_v,$ respectively. So $\gamma_{\max}$ and $\gamma_{\min}$ are also upper and lower bounds on the eigenvalues of $P.$  
\end{itemize}

\section{Finite-Time Performance Bounds}

To establish the finite-time performance of the two time-scale linear stochastic approximation algorithm \eqref{2sa}, we define
$$Z_k=V_k+\bar{A}_{vv}^{-1}\bar{A}_{vu}U_k\quad \hbox{ and }\quad \Theta_k=\begin{pmatrix}U_k\\Z_k\end{pmatrix}.$$
Then we consider the following Lyapunov function:
\begin{align}\label{eq: Lyapunov}
    W(\Theta_k) = \Theta_k^\top  P \Theta_k,
\end{align} where $P$ is a symmetric positive definite matrix defined in \eqref{eq:P} because both $P_u$ and $P_v$ are positive definite matrices. The reason to introduce $Z_k$ will become clear when we introduce the key idea of our analysis based on singular perturbation theory.


The following lemma bounds the expected change in the Lyapunov function in one time step. 
\begin{lemma}\label{lem:drift}
For any $k \geq \tau$ and $\epsilon,$ $\alpha,$ and $\beta$ such that $\eta_1\tilde{\epsilon}\tau + 2\frac{\tilde{\epsilon}^2}{\epsilon^{\alpha}}\gamma_{\max} \leq \frac{\kappa_1}{2}$, the following inequality holds:
\begin{align*}
    \E[W(\Theta_{k + 1}) - W(\Theta_k)]\leq  -\frac{{\epsilon}^\alpha}{\gamma_{\max}}\left(\frac{\kappa_1}{2} - \kappa_2\epsilon^{\alpha - \beta}\right) \E[W(\Theta_k)] + {\epsilon}^{2\beta}\tau {\eta}_2,
\end{align*} where $\tilde{\epsilon} = 2\epsilon^{\alpha}\left(1+\|\bar{A}_{vv}^{-1}\bar{A}_{vu}\| + {\epsilon^{\beta - \alpha}}\right),$ and $\eta_1,$ $\eta_2$ $\kappa_1,$ and $\kappa_2$ are constants independent of $\epsilon.$   
\end{lemma}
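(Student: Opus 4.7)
The plan is to work in the decoupled coordinates $(U_k,Z_k)$, apply the standard quadratic drift decomposition of the block-diagonal Lyapunov function $W$, and then use the Markovian decorrelation technique of \citep{srikant2019finite} to replace random matrices evaluated at $X_k$ by their stationary means, absorbing the resulting remainder into the $\kappa_1/2$ hypothesis term and the $\epsilon^{2\beta}\tau\eta_2$ noise floor.

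First, substituting $V_k=Z_k-\bar{A}_{vv}^{-1}\bar{A}_{vu}U_k$ into \eqref{2sa} gives
\begin{align*}
U_{k+1}-U_k &= \epsilon^{\alpha}\bigl(B(X_k)U_k+A_{uv}(X_k)Z_k+b_u(X_k)\bigr),\\
Z_{k+1}-Z_k &= \epsilon^{\beta}\bigl(\tilde{B}(X_k)U_k+A_{vv}(X_k)Z_k+b_v(X_k)\bigr)+\epsilon^{\alpha}\bar{A}_{vv}^{-1}\bar{A}_{vu}\bigl(B(X_k)U_k+A_{uv}(X_k)Z_k+b_u(X_k)\bigr).
\end{align*}
The key structural identity behind the coordinate change is $\bar{\tilde{B}}=\bar{A}_{vu}-\bar{A}_{vv}\bar{A}_{vv}^{-1}\bar{A}_{vu}=0$, so the $\epsilon^{\beta}$ coupling of $Z$ to $U$ vanishes in stationary expectation; only an $\epsilon^{\alpha}$-order cross-coupling between $U$ and $Z$ survives, which is precisely what the weights $\xi_v/(\xi_u+\xi_v)$ and $\xi_u/(\xi_u+\xi_v)$ in $P$ are engineered to absorb via Young's inequality.

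Next, applying $y^\top Py-x^\top Px=(y-x)^\top P(y-x)+2(y-x)^\top Px$ splits the one-step drift into a quadratic piece and a first-order piece. The quadratic piece is at most $\gamma_{\max}\|\Theta_{k+1}-\Theta_k\|^2$; by Assumption~2 and the identities above, $\|\Theta_{k+1}-\Theta_k\|$ is bounded by $\tilde{\epsilon}$ times an affine function of $\|\Theta_k\|$, so this piece is controlled by $(\tilde{\epsilon}^{\,2}\gamma_{\max}/\gamma_{\min})\E[W(\Theta_k)]+O(\tilde{\epsilon}^{\,2})$, with the first summand absorbed by the $2\tilde{\epsilon}^{\,2}\gamma_{\max}/\epsilon^{\alpha}$ term in the hypothesis and the second feeding into $\eta_2$. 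For the first-order piece $2\E[(\Theta_{k+1}-\Theta_k)^\top P\Theta_k]$, the challenge is that $X_k$ is correlated with $\Theta_k$. I would apply the shift-by-$\tau$ trick: write $\Theta_k=\Theta_{k-\tau}+(\Theta_k-\Theta_{k-\tau})$, condition on $X_{k-\tau}$, and invoke Assumption~3 to replace each of $B(X_k),\tilde{B}(X_k),A_{uv}(X_k),A_{vv}(X_k),b_u(X_k),b_v(X_k)$ by its stationary mean up to an $O(\tilde{\epsilon})$ error; the correction $\Theta_k-\Theta_{k-\tau}$ is bounded by unfolding the update for $\tau$ steps and using Assumption~2. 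The resulting remainder is $O(\epsilon^{\alpha}\tilde{\epsilon}\tau)\,\E[W(\Theta_k)]$ (absorbed into $\eta_1\tilde{\epsilon}\tau\le\kappa_1/2$) plus $O(\epsilon^{2\beta}\tau)$ constants.

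After decorrelation, the two Lyapunov equations $\bar{B}^\top P_u+P_u\bar{B}=-I$ and $\bar{A}_{vv}^\top P_v+P_v\bar{A}_{vv}=-I$ produce the leading negative drift $-\epsilon^{\alpha}\frac{\xi_v}{\xi_u+\xi_v}\E[\|U_k\|^2]-\epsilon^{\beta}\frac{\xi_u}{\xi_u+\xi_v}\E[\|Z_k\|^2]$. The two surviving mean-field cross terms, $2\epsilon^{\alpha}\frac{\xi_v}{\xi_u+\xi_v}\E[U_k^\top P_u\bar{A}_{uv}Z_k]$ from the $U$ update and $2\epsilon^{\alpha}\frac{\xi_u}{\xi_u+\xi_v}\E[Z_k^\top P_v\bar{A}_{vv}^{-1}\bar{A}_{vu}\bar{B}U_k]$ from the fast component of the $Z$ update, each have magnitude at most $\epsilon^{\alpha}\xi_u\xi_v/(\xi_u+\xi_v)\cdot\|U_k\|\|Z_k\|$ by the very definitions of $\xi_u,\xi_v$; splitting them via Young's inequality with a weight matched to the diagonal coefficients delivers the stated bound, where the $\kappa_2\epsilon^{\alpha-\beta}$ correction is exactly the Young penalty drawn from the weaker $\epsilon^{\beta}$-scale $\|Z_k\|^2$ drift. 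The hardest part is the bookkeeping in the decorrelation step: six random quantities must be replaced by their stationary limits while their errors are tracked through a quadratic form, and the $\|\Theta_k-\Theta_{k-\tau}\|$ correction has to be unwound back into $\|\Theta_k\|$ without inflating any cumulative remainder above $O(\epsilon^{2\beta}\tau(\E[W(\Theta_k)]+1))$.
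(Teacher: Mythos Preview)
Your outline is essentially the paper's proof: the same $(U,Z)$ coordinate change, the same quadratic-plus-linear splitting of $W(\Theta_{k+1})-W(\Theta_k)$, the same shift-by-$\tau$ decorrelation (packaged in the paper as a separate lemma bounding $\Theta_k^\top P(\bar{\tilde A}\Theta_k-\epsilon^{-\alpha}(\Theta_{k+1}-\Theta_k))$), and the same use of the Lyapunov equations to generate the diagonal negative drift. The one substantive technical difference is how the $2\times 2$ quadratic form in $(\|U_k\|,\|Z_k\|)$ is bounded: you propose Young's inequality with a weight tuned to the diagonals, whereas the paper assembles the matrix
\[
\Psi=\frac{1}{\xi_u+\xi_v}\begin{pmatrix}\xi_v & -\xi_u\xi_v\\ -\xi_u\xi_v & \xi_u(\epsilon^{\beta-\alpha}-\nu)\end{pmatrix}
\]
and proves a dedicated lemma that $\lambda_{\min}(\Psi)\ge \kappa_1-\kappa_2\epsilon^{\alpha-\beta}$ with $\kappa_1=\xi_v/(\xi_u+\xi_v)$, via a second-order Taylor expansion of the explicit eigenvalue formula in $\mu=\epsilon^{\alpha-\beta}$. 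Young's inequality will also work here, but to recover exactly $\kappa_1=\xi_v/(\xi_u+\xi_v)$ (rather than a strictly smaller constant) you must take the Young weight $\mu$-dependent, which is effectively what the eigenvalue computation does implicitly; the paper's route makes the constants and the $\epsilon^{\alpha-\beta}$ dependence more transparent.

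One bookkeeping omission: your list of ``two surviving mean-field cross terms'' misses the diagonal perturbation $2\epsilon^{\alpha}\tfrac{\xi_u}{\xi_u+\xi_v}Z_k^\top P_v\,\bar A_{vv}^{-1}\bar A_{vu}\bar A_{uv}\,Z_k$ coming from the $\epsilon^{\alpha}$ part of the $Z$ update. In the paper this is the $\nu=2\|P_v\bar A_{vv}^{-1}\bar A_{vu}\bar A_{uv}\|$ appearing in the $(2,2)$ entry of $\Psi$; it is $O(\epsilon^{\alpha-\beta})$ relative to the $\epsilon^{\beta}$ diagonal and hence also folds into $\kappa_2$, so the omission is harmless but should be accounted for.
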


The proof of Lemma \ref{lem:drift} is somewhat involved, and is provided in the supplementary material. The definitions of $\eta_1,$ $\eta_2,$ $\kappa_1$ and $\kappa_2$ can be found in the supplementary material as well. Here, we provide some intuition behind the result by studying a related ordinary differential equation (ODE). In particular, consider the expected change in the stochastic system divided by the slow time-scale step size $\epsilon^\alpha$:
\begin{align}
\begin{split}
        &\frac{\E[U_{k+1}-U_k|U_{k-\tau}=u, V_{k-\tau}=v, X_{k-\tau}=x]}{\epsilon^\alpha}\\
        =& \E\left[\left.\left({A}_{uu}(X_k)U_k+{A}_{uv}(X_k)V_k +{b}_u\right)\right|U_{k-\tau}=u, V_{k-\tau}=v, X_{k-\tau}=x\right]\\
    &\epsilon^{\alpha-\beta}\frac{\E[V_{k+1}-V_k|U_{k-\tau}=u, V_{k-\tau}=v, X_{k-\tau}=x]}{\epsilon^\alpha}\\
    = &\E\left[\left.\left({A}_{vu}(X_k)U_k+{A}_{vv}(X_k)V_k +{b}_v(X_k)\right)\right|U_{k-\tau}=u, V_{k-\tau}=v, X_{k-\tau}=x\right],
\end{split} 
\end{align} where the expectation is conditioned sufficiently in the past in terms of the underlying Markov chain (i.e. conditioned on the state at time ${k-\tau}$ instead of $k$)  so the expectation is approximately in steady-state. 

Approximating the left-hand side by derivatives and the right-hand side using steady-state expectations, we get the following ODEs:
\begin{align}
    \dot{u}=&\bar{A}_{uu} u+\bar{A}_{uv}v \label{eq: slow}\\
    \epsilon^{\alpha-\beta} \dot{v}=& \bar{A}_{vu}u+\bar{A}_{vv}v. \label{eq: fast}
\end{align} 
Note that, in the limit as $\epsilon\rightarrow 0,$ the second of the above two ODEs becomes an algebraic equation, instead of a differential equation. In the control theory literature, such systems are called singularly-perturbed differential equations, see for example \citep{kokotovic1999singular}.
In \citep[Chapter 11]{khalil2002nonlinear}, the following Lyapunov equation has been suggested to study the stability of such singularly perturbed ODEs:
\begin{equation}\label{eq: khalil lyap}
    W(u,v)= d u^\top P_u u+(1-d) \left(v+\bar{A}_{vv}^{-1}\bar{A}_{vu}u\right)^\top P_v \left(v+\bar{A}_{vv}^{-1}\bar{A}_{vu}u\right), 
\end{equation}  
for $d \in [0,1].$ The function $W$ mentioned earlier in (\ref{eq: Lyapunov}) is the same as above for a carefully chosen $d.$
The rationale behind the use of the Lyapunov function (\ref{eq: khalil lyap}) is presented in the appendix.

The intuition behind the result in Lemma~\ref{lem:drift} can be understood by studying the dynamics of the above Lyapunov function in the ODE setting. To simplify the notation, we define 
$z=v+\bar{A}_{vv}^{-1}\bar{A}_{vu}u,$ so the Lyapunov function can also be written as 
\begin{equation}
    W(u,z)= d u^\top P_u u+(1-d) z^\top P_v z, 
\end{equation} 
and adapting the manipulations for nonlinear ODEs in \cite[Chapter 11]{khalil2002nonlinear} to our
linear model, we get
\begin{align}
    \dot{W}=&2d u^TP_u \dot{u}+2(1-d)z^\top P_v\dot{z}\\
    \leq &-\begin{pmatrix}\|u\|& \|z\|\end{pmatrix} \tilde{\Psi} \begin{pmatrix}\|u\|\\\|z\|\end{pmatrix},
\end{align}
where \begin{equation}
    \tilde{\Psi}=\begin{pmatrix}d& -d\gamma_{\max}-(1-d)\gamma_{\max}\sigma_{\min}\\
    -d\gamma_{\max}-(1-d)\gamma_{\max}\sigma_{\min}& \left(\frac{1-d}{2\epsilon^{\alpha-\beta}}-(1-d)\gamma_{\max}\sigma_{\min}\right)\end{pmatrix}.
\end{equation}

Note that $\Phi$ is positive definite when 
\begin{equation}
    d\left(\frac{1-d}{2\epsilon^{\alpha-\beta}}-(1-d)\gamma_{\max}\sigma_{\min}\right)\geq  \left(d\gamma_{\max}+(1-d)\gamma_{\max}\sigma_{\min}\right)^2,
\end{equation} i.e., when 
\begin{equation}
  \epsilon^{\alpha-\beta}\leq \frac{d(1-d)}{2d (1-d)\gamma_{\max}\sigma_{\min}+ \left(d\gamma_{\max}+(1-d)\gamma_{\max}\sigma_{\min}\right)^2}.
\end{equation} Let $\tilde{\lambda}_{\min}$ denote the smallest eigenvalue of $\tilde{\Psi}.$ We have 
\begin{align}
    \dot{W}\leq -\tilde{\lambda}_{\min}\left(\|u\|^2+\|z\|^2\right)\leq -\frac{\tilde{\lambda}_{\min}}{\gamma_{\max}}W.
\end{align}  In particular, recall that we obtained the ODEs by dividing by the step-size $\epsilon^{\alpha}.$ Therefore, for the discrete equations, we would expect 
\begin{align}
    \E[W(\Theta_{k+1})-W(\Theta_k)]\approx \leq -\epsilon^{\alpha}\frac{\tilde{\lambda}_{\min}}{\gamma_{\max}}\E\left[W(\Theta_k)\right],
\end{align} which resembles the transient term of the upper bound in Lemma \ref{lem:drift}. The exact expression in the discrete, stochastic case is of course different and additionally includes a steady-state term, which is not captured by the ODE analysis above. 

Now, we are ready to the state the main theorem.

\begin{theorem}\label{thm:MSE}
For any $k \geq \tau,$ $\epsilon,$ $\alpha$ and $\beta$ such that $\eta_1\tilde{\epsilon}\tau + 2\frac{\tilde{\epsilon}^2}{\epsilon^{\alpha}}\gamma_{\max} \leq \frac{\kappa_1}{2},$ we have
\begin{align*}
    \E[\|\Theta_{k}\|^2] \leq& \frac{\gamma_{\max}}{\gamma_{\min}}\left(1 - \frac{\epsilon^{\alpha}}{\gamma_{\max}}\left(\frac{\kappa_1}{2} - \kappa_2\epsilon^{\alpha - \beta}\right)\right)^{k - \tau}\left(1.5\|\Theta_0\| + 0.5b_{\max}\right)^2  \\&+\epsilon^{2\beta-\alpha}   \frac{\gamma_{\max}}{\gamma_{\min}} \frac{{\eta}_2\tau}{\left(\frac{\kappa_1}{2} - \kappa_2\epsilon^{\alpha - \beta}\right)}. 
\end{align*}
\end{theorem}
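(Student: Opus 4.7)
The plan is to iterate the one-step drift inequality from Lemma~\ref{lem:drift}, convert the resulting bound on the Lyapunov function into a bound on $\|\Theta_k\|^2$ using the eigenvalue bounds on $P$, and finally control the initial condition $\E[\|\Theta_\tau\|^2]$ using the boundedness assumptions from Assumption 2.

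First, I would abbreviate Lemma~\ref{lem:drift} as
\[
\E[W(\Theta_{k+1})] \leq (1-c)\,\E[W(\Theta_k)] + d, \qquad k \geq \tau,
\]
where $c := (\epsilon^\alpha/\gamma_{\max})(\kappa_1/2 - \kappa_2\epsilon^{\alpha-\beta})$ and $d := \epsilon^{2\beta}\tau\eta_2$. Unrolling this linear recursion from $k=\tau$ and summing the geometric series gives
\[
\E[W(\Theta_k)] \leq (1-c)^{k-\tau}\,\E[W(\Theta_\tau)] + \frac{d}{c} = (1-c)^{k-\tau}\,\E[W(\Theta_\tau)] + \frac{\gamma_{\max}\,\eta_2\,\tau\,\epsilon^{2\beta-\alpha}}{\kappa_1/2 - \kappa_2\epsilon^{\alpha-\beta}}.
\]
The hypothesis on $\epsilon$ guarantees $0<c<1$, so the geometric factor is meaningful. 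Since $\gamma_{\min}\|\Theta\|^2 \leq \Theta^\top P\,\Theta \leq \gamma_{\max}\|\Theta\|^2$, dividing by $\gamma_{\min}$ and bounding $\E[W(\Theta_\tau)] \leq \gamma_{\max}\E[\|\Theta_\tau\|^2]$ produces the claimed form up to the coefficient $(1.5\|\Theta_0\| + 0.5b_{\max})^2$.

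The remaining, and most delicate, step is bounding $\E[\|\Theta_\tau\|^2]$ deterministically in terms of $\|\Theta_0\|$ so that the prefactor $(1.5\|\Theta_0\|+0.5 b_{\max})^2$ emerges. For this I would unroll the two time-scale recursion \eqref{2sa} for $\tau$ steps: using $\|A_{\cdot\cdot}(x)\|\leq 1$ and $\|b_{\cdot}(x)\|\leq b_{\max}$ from Assumption 2, one obtains
\[
\|U_{k+1}\| \leq (1+\epsilon^\alpha)\|U_k\| + \epsilon^\alpha(\|V_k\|+b_{\max}), \qquad \|V_{k+1}\|\leq \epsilon^\beta\|U_k\| + (1+\epsilon^\beta)\|V_k\|+\epsilon^\beta b_{\max}.
\]
Converting to $\Theta_k=(U_k,Z_k)$ with $Z_k=V_k+\bar{A}_{vv}^{-1}\bar{A}_{vu}U_k$ and using $\|\Theta_k\|\leq \|U_k\|+\|V_k\|+\|\bar{A}_{vv}^{-1}\bar{A}_{vu}\|\|U_k\|$, I would obtain a bound of the form
\[
\|\Theta_\tau\| \leq (1+C\tilde{\epsilon})^{\tau}\|\Theta_0\| + (\text{noise term involving } b_{\max}),
\]
for some constant $C$ absorbing $1+\|\bar{A}_{vv}^{-1}\bar{A}_{vu}\|+\epsilon^{\beta-\alpha}$. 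Invoking Assumption~4 ($\tilde{\epsilon}\tau \leq 1/4$) together with $(1+x)^\tau \leq e^{x\tau}$ makes the exponential factor at most $e^{C/4}$, which for a suitable choice of constants is bounded by $1.5$, while the additive noise is bounded by $0.5 b_{\max}$. Squaring this bound on $\|\Theta_\tau\|$ and substituting into the iterated drift inequality yields the theorem.

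The main obstacle I anticipate is the bookkeeping in this last step: the transformation from $(U,V)$ to $(U,Z)$ mixes the two time-scales, so carefully tracking the dependence on $\epsilon^\alpha$ versus $\epsilon^\beta$ through $\tau$ steps — and verifying that the constants line up with $\tilde{\epsilon}\tau\leq 1/4$ to produce the clean numerical factors $1.5$ and $0.5$ — requires some care. Everything else is a mechanical iteration of the drift inequality plus standard eigenvalue sandwiching of the quadratic form $W$.
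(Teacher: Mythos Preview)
Your overall strategy---iterate the drift inequality from Lemma~\ref{lem:drift}, sandwich $W$ between $\gamma_{\min}\|\Theta\|^2$ and $\gamma_{\max}\|\Theta\|^2$, then bound $\|\Theta_\tau\|$ in terms of $\|\Theta_0\|$---is exactly the paper's approach, and your first two steps match it line for line.

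The only divergence is in the third step. You unroll the original $(U,V)$ recursion, convert back to $\Theta$, and end up with a multiplicative factor $(1+C\tilde\epsilon)^\tau\le e^{C\tilde\epsilon\tau}$; you then hope that $\tilde\epsilon\tau\le 1/4$ makes this at most $1.5$. That hope is fragile: with the natural value $C=2$ one gets $e^{1/2}\approx 1.65$, so the exact constants $1.5$ and $0.5$ do not fall out of the exponential bound. The paper avoids this by staying in $\Theta$-coordinates throughout and using the one-step estimate $\|\Theta_{k+1}-\Theta_k\|\le\tilde\epsilon(\|\Theta_k\|+b_{\max})$ (this is Lemma~\ref{lem:diffbound} in the appendix), which after $\tau$ steps gives the \emph{linear} bound
\[
\|\Theta_\tau-\Theta_0\|\le 2\tilde\epsilon\tau\,\|\Theta_0\|+2\tilde\epsilon\tau\,b_{\max}.
\]
Triangle inequality and $\tilde\epsilon\tau\le 1/4$ then yield $\|\Theta_\tau\|\le(1+2\tilde\epsilon\tau)\|\Theta_0\|+2\tilde\epsilon\tau\,b_{\max}\le 1.5\|\Theta_0\|+0.5\,b_{\max}$ on the nose. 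So the ``main obstacle'' you anticipate disappears entirely if you work in the transformed coordinates from the start rather than detouring through $(U,V)$.
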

\begin{proof}
Applying Lemma \ref{lem:drift} recursively, we obtain
\begin{align}
    \begin{split}
        \E[W(\Theta_k)] \leq u^{k - \tau}\E[W(\Theta_{\tau})] + v\frac{1 - u^{k - \tau}}{1 - u} \leq u^{k - \tau}\E[W(\Theta_k)] + v\frac{1}{1 - u}
    \end{split}
\end{align}
where $u = 1 - \frac{\epsilon^{\alpha}}{\gamma_{\max}}\left(\frac{\kappa_1}{2} - \kappa_2\epsilon^{\alpha - \beta}\right)$ and $v = {\eta}_2\tau\epsilon^{2\beta}$. Also, we have that 
\begin{align}
    \E[\|\Theta_k\|^2] \leq \frac{1}{\gamma_{\min}}\E[W(\Theta_k)] \leq \frac{1}{\gamma_{\min}}u^{k - \tau}\E[W(\Theta_{\tau})] + v\frac{1}{\gamma_{\min}(1 - u)}. 
\end{align}
Furthermore, 
\begin{align}
    \begin{split}
        \E[W(\Theta_\tau)] \leq \gamma_{\max}\E[\|\Theta_{\tau}\|^2]
        &\leq \gamma_{\max}\E[(\|\Theta_{\tau} - \Theta_0\| + \|\Theta_0\|)^2]\\
        &\leq \gamma_{\max}\left((1 + 2\tilde{\epsilon}\tau)\|\Theta_0\| + 2\tilde{\epsilon}\tau b_{\max}\right)^2. 
    \end{split}
\end{align}
The theorem then holds using the fact that $\tilde{\epsilon}\tau \leq \frac{1}{4}$.
\end{proof}

\section{Adaptive Selection of Learning Rates}
Equipped with the theoretical results from previous section, one interesting question that arises is the following: 
\textit{given a time-scale ratio $\lambda = \frac{\alpha}{\beta}$, can we use the finite-time performance bound to design a rule for adapting the learning rate to optimize performance?}


In order to simplify the discussion, let $\epsilon^{\beta} = \mu$ and $\epsilon^{\alpha} = \mu^{\lambda}$. Therefore, Theorem \ref{thm:MSE} can be simplified and written as
\begin{align}\label{eq:simpleMSE}
    \begin{split}
        \E[\|\Theta_{k}\|^2] \leq& K_1\left(1 - \mu^{\lambda}\left(\frac{\kappa_1}{2\gamma_{\max}} - \frac{\kappa_2}{\gamma_{\max}}\mu^{\lambda - 1}\right)\right)^{k} + \mu^{2-\lambda}    \frac{K_2}{\left(\frac{\kappa_1}{2} - \kappa_2\mu^{\lambda - 1}\right)}
    \end{split}
\end{align}
where $K_1$ and $K_2$ are problem-dependent positive constants. Since we want the system to be stable, we will assume that $\mu$ is small enough such that $\frac{\kappa_1}{2\gamma_{\max}} - \frac{\kappa_2}{\gamma_{\max}}\mu^{\lambda - 1} = c > 0$. Plugging this condition in \eqref{eq:simpleMSE}, we get
\begin{align}\label{eq:twotimeMSE}
    \begin{split}
        \E[\|\Theta_{k}\|^2] \leq& K_1\left(1 - c\mu^{\lambda}\right)^{k} + \frac{K_2\mu^{2 - \lambda}}{\gamma_{\max}c}
    \end{split}
\end{align}
In order to optimize performance for a given number of samples, we would like to choose the learning rate $\mu$ as a function of the time step. In principle, one can assume time-varying learning rates, derive more general mean-squared error expressions (similar to Theorem \ref{thm:MSE}), and then try to optimize over the learning rates to minimize the error for a given number of samples. However, this optimization problem is computationally intractable. We note that even if we assume that we are only going to change the learning rate a finite number of times, the resulting optimization problem of finding the times at which such changes are performed and finding the learning rate at these change points is an equally intractable optimization problem. Therefore, we have to devise simpler adaptive learning rate rules.

\begin{wrapfigure}{R}{0.5\textwidth}
    \begin{center}
        \centerline{\includegraphics[width=0.48\textwidth]{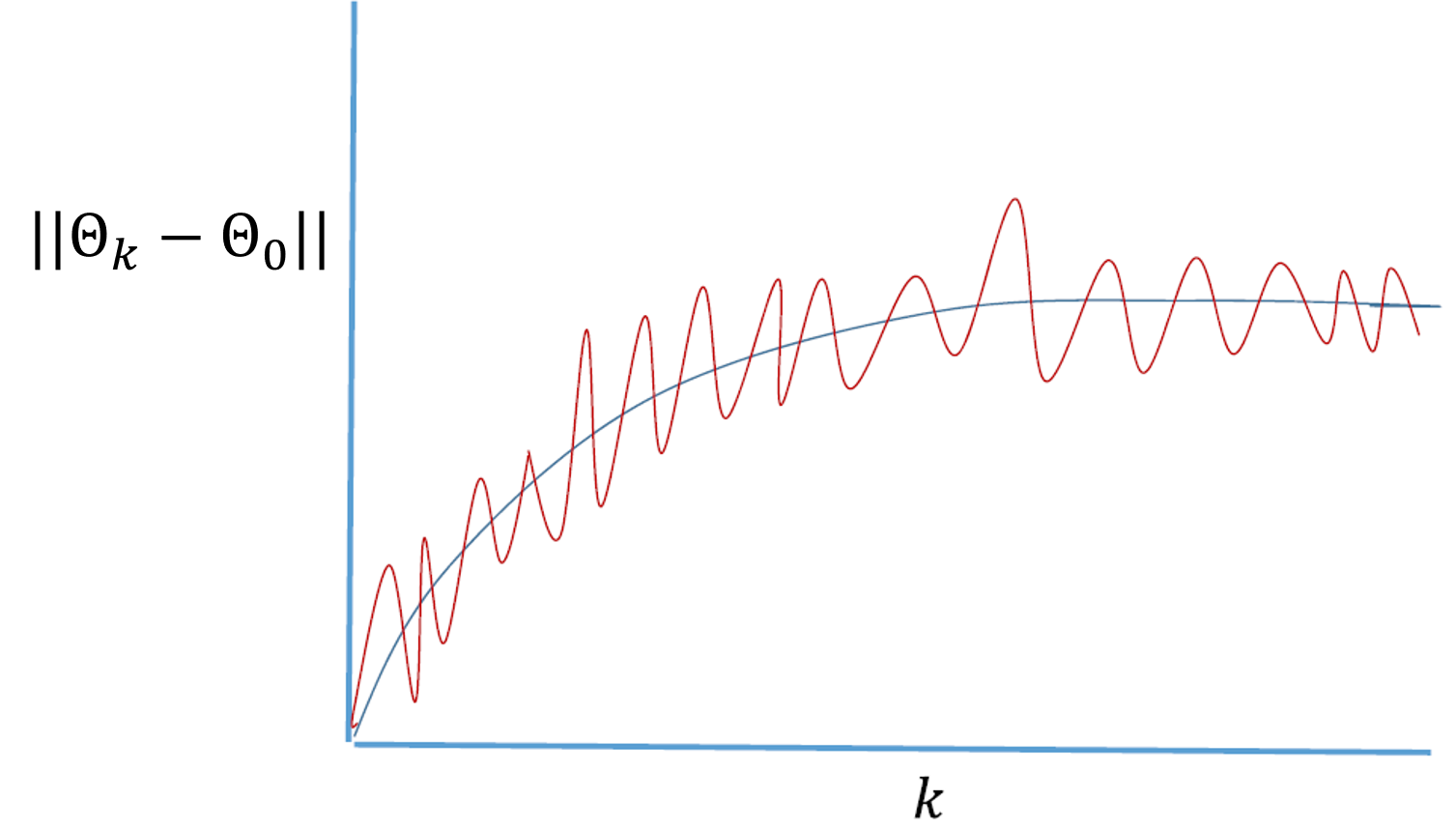}}
        \caption{The evolution of $\|\Theta_k-\Theta_0\|$.}
        \label{evolution}
    \end{center}
\end{wrapfigure}

To motivate our learning rate rule, we first consider a time $T$ such that errors due to the transient and steady-state parts in \eqref{eq:twotimeMSE} are equal, i.e., 
\begin{equation}\label{eq:steadystate}
    K_1(1-c\mu^{\lambda})^T = \frac{K_2\mu^{2 - \lambda}}{\gamma_{\max}c}
\end{equation}
From this time onwards, running the two time-scale stochastic approximation algorithm any further with $\mu$ as the learning rate is not going to significantly improve the mean-squared error. In particular, the mean-squared error beyond this time is upper bounded by twice the steady-state error $\frac{K_2\mu^{2 - \lambda}}{\gamma_{\max}c}$. Thus, at time $T,$ it makes sense to reset $\mu$ as $\mu\leftarrow \mu/\xi,$ where $\xi>1$ is a hyperparameter. Roughly speaking, $T$ is the time at which one is close to steady-state for a given learning rate, and therefore, it is the time to reduce the learning rate to get to a new "steady-state" with a smaller error.

The key difficulty in implementing the above idea is that it is difficult to determine $T$. For ease of exposition, we considered a system centered around $0$ in our analysis (i.e., $\Theta^* = 0$). More generally, the results presented in Theorem \ref{thm:MSE} and \eqref{eq:simpleMSE} - \eqref{eq:twotimeMSE} will have $\Theta_k$ replaced by $\Theta_k - \Theta^*$. In any practical application, $\Theta^*$ will be unknown. Thus, we cannot determine $\|\Theta_k - \Theta^*\|$ as a function of $k$ and hence, it is difficult to use this approach.

Our idea to overcome this difficulty is to estimate whether the algorithm is close to its steady-state by observing $\|\Theta_k-\Theta_0\|$ where $\Theta_0$ is our initial guess for the unknown parameter vector and is thus known to us.  Note that $\|\Theta_k-\Theta_0\|$ is zero at $k=0$ and will increase (with some fluctuations due to randomness) to $\|\Theta^*-\Theta_0\|$ in steady-state, see Figure~\ref{evolution} for an illustration. Roughly speaking, we approximate the curve in this figure by a sequence of straight lines, i.e., perform a piecewise linear approximation, and conclude that the system has reached steady-state when the lines become approximately horizontal. We provide the details next.


To derive a test to estimate whether $\|\Theta_k-\Theta_0\|$ has reached steady-state, we first note the following inequality for $k\geq T$ (i.e., after the steady-state time defined in \eqref{eq:steadystate}):

\begin{align}\label{eq:steadyineq}
    \begin{split}
        \E[\|\Theta_0 - \Theta^*\|] - \E[\|\Theta_k - \Theta^*\|]  \leq &\E[\|\Theta_k - \Theta_0\|] \leq \E[\|\Theta_k - \Theta^*\|] + \E[\|\Theta_0 - \Theta^*\|]
        \\
        \Rightarrow d - \sqrt{\frac{2K_2\mu^{2 - \lambda}}{\gamma_{\max}c}} \leq &\E[\|\Theta_k - \Theta_0\|] \leq d + \sqrt{\frac{2K_2\mu^{2 - \lambda}}{\gamma_{\max}c}} 
    \end{split}
\end{align}
where the first pair of inequalities follow from the triangle inequality and the second pair of inequalities follow from \eqref{eq:twotimeMSE} - \eqref{eq:steadystate}, Jensen's inequality and letting $d = \E[\|\Theta_0 - \Theta^*\|]$.
Now, for $k \geq T$,  consider the following $N$ points: $\{X_i = i, Y_i = \|\Theta_{k + i} - \Theta_0\|\}_{i = 1}^N$. Since these points are all obtained after ``steady-state" is reached, if we draw the best-fit line through these points, its slope should be small. More precisely, let $\psi_N$ denote the slope of the best-fit line passing through these $N$ points. Using \eqref{eq:steadyineq} along with formulas for the slope in linear regression, and after some algebraic manipulations (see Appendix \ref{app:slopecal} for detailed calculations), one can show that:
\begin{align}\label{eq:slope}
    \begin{split}
        |\E[\psi_N]| = O\left(\frac{\mu^{1 - \frac{\lambda}{2}}}{N}\right),\quad \text{Var}(\psi_N) = O\left(\frac{1}{N^2}\right)
    \end{split}
\end{align}
Therefore, if $N \geq \frac{\chi}{\mu^{\frac{\lambda}{2}}}$, then the slope of the best-fit line connecting $\{X_i, Y_i\}$ will be $O\left(\frac{\mu^{1 - \frac{\lambda}{2}}}{N}\right)$ with high probability (for a sufficiently large constant $\chi > 0$). On the other hand, when the algorithm is in the transient state, the difference between $\|\Theta_{k+m}-\Theta_0\|$ and $\|\Theta_k-\Theta_0\|$ will be $O(m\mu)$ since $\Theta_k$ changes by $O(\mu)$ from one time slot to the next (see Lemma 3 in Appendix \ref{app:lemma1proof} for more details). Using this fact, the slope of the best-fit line through $N$ consecutive points in the transient state can be shown to be $O\left(\mu\right)$, similar to \eqref{eq:slope}. Since we choose $N \geq \frac{\chi}{\mu^{\frac{\lambda}{2}}}$, the slope of the best-fit line in steady state, i.e., $O\left(\frac{\mu^{1 - \frac{\lambda}{2}}}{N}\right)$ will be lower than the slope of the best-fit line in the transient phase, i.e., $O\left(\mu\right)$ (for a sufficiently large $\chi$). We use this fact as a diagnostic test to determine whether or not the algorithm has entered steady-state. If the diagnostic test returns true, we update the learning rate (see Algorithm \ref{alg:1}).

\begin{wrapfigure}{L}{0.6\textwidth}
    \begin{minipage}{0.6\textwidth}
      \begin{algorithm}[H]
   \caption{Adaptive Learning Rate Rule}
  \label{alg:1}
\begin{algorithmic}
  \STATE {\bfseries Hyperparameters:} $\rho, \sigma, \xi, N$
  \STATE Initialize $\mu = \rho$, $\psi_N = 2\sigma\mu^{1 - \frac{\lambda}{2}}$, $\Theta_0$, $\Theta_{\text{ini}} = \Theta_0$.
  \FOR{$i=1, 2, ...$}
  \STATE Do two time-scale algorithm update.
  \STATE Compute $\psi_N = \text{Slope}\left(\{k, \|\Theta_{i - k} - \Theta_{\text{ini}}\|\}_{k = 0}^{N - 1}\right)$.  
  \IF{$\psi_N < \frac{\sigma\mu^{1 - \frac{\lambda}{2}}}{N}$}
  \STATE $\mu = \frac{\mu}{\xi}$.
  \STATE $\Theta_{\text{ini}} = \Theta_i$.
  \ENDIF
  \ENDFOR
    \end{algorithmic}
    \end{algorithm}
    \end{minipage}
  \end{wrapfigure}

We note that our adaptive learning rate rule will also work for single time-scale reinforcement learning algorithms such as TD($\lambda$) since our expressions for the mean-square error, when specialized to the case of a single time-scale, will recover the result in \citep{srikant2019finite}.
Therefore, an interesting question that arises from \eqref{eq:simpleMSE} is whether one can optimize the rate of convergence with respect to the time-scale ratio $\lambda$? Since the RHS in \eqref{eq:simpleMSE} depends on a variety of problem-dependent parameters, it is difficult to optimize it over $\lambda$. An interesting direction of further research is to investigate if practical adaptive strategies for $\lambda$ can be developed in order to improve the rate of convergence further.
\section{Experiments}

We implemented our adaptive learning rate schedule on two popular classic control problems in reinforcement learning - Mountain Car and Inverted Pendulum, and compared its performance with the optimal polynomial decay learning rate rule suggested in \citep{dalal2017finitetwo} (described in the next subsection). See Appendix \ref{app:experiment} for more details on the Mountain Car and Inverted Pendulum problems. We evaluated the following policies using the two time-scale TDC algorithm (see \citep{sutton2009fast} for more details regarding TDC):
\begin{itemize}[leftmargin=*]
    \item Mountain Car - At each time step, choose a random action $\in \{0, 2\}$, i.e., accelerate randomly to the left or right.
    \item Inverted Pendulum - At each time step, choose a random action in the entire action space, i.e., apply a random torque $\in [-2.0, 2.0]$ at the pivot point.
\end{itemize}
Since the true value of $\Theta^*$ is not known in both problems we consider, to quantify the performance of the TDC algorithm, we used the error metric known as the \textit{norm of the expected TD update} (NEU, see \citep{sutton2009fast} for more details). For both problems, we used a $O(3)$ Fourier basis (see \citep{konidaris2011value} for more details) to approximate the value function and used $0.95$ as the discount factor.

\begin{figure}
     \centering
     \begin{subfigure}[b]{0.48\textwidth}
         \centering
         \includegraphics[width=\textwidth]{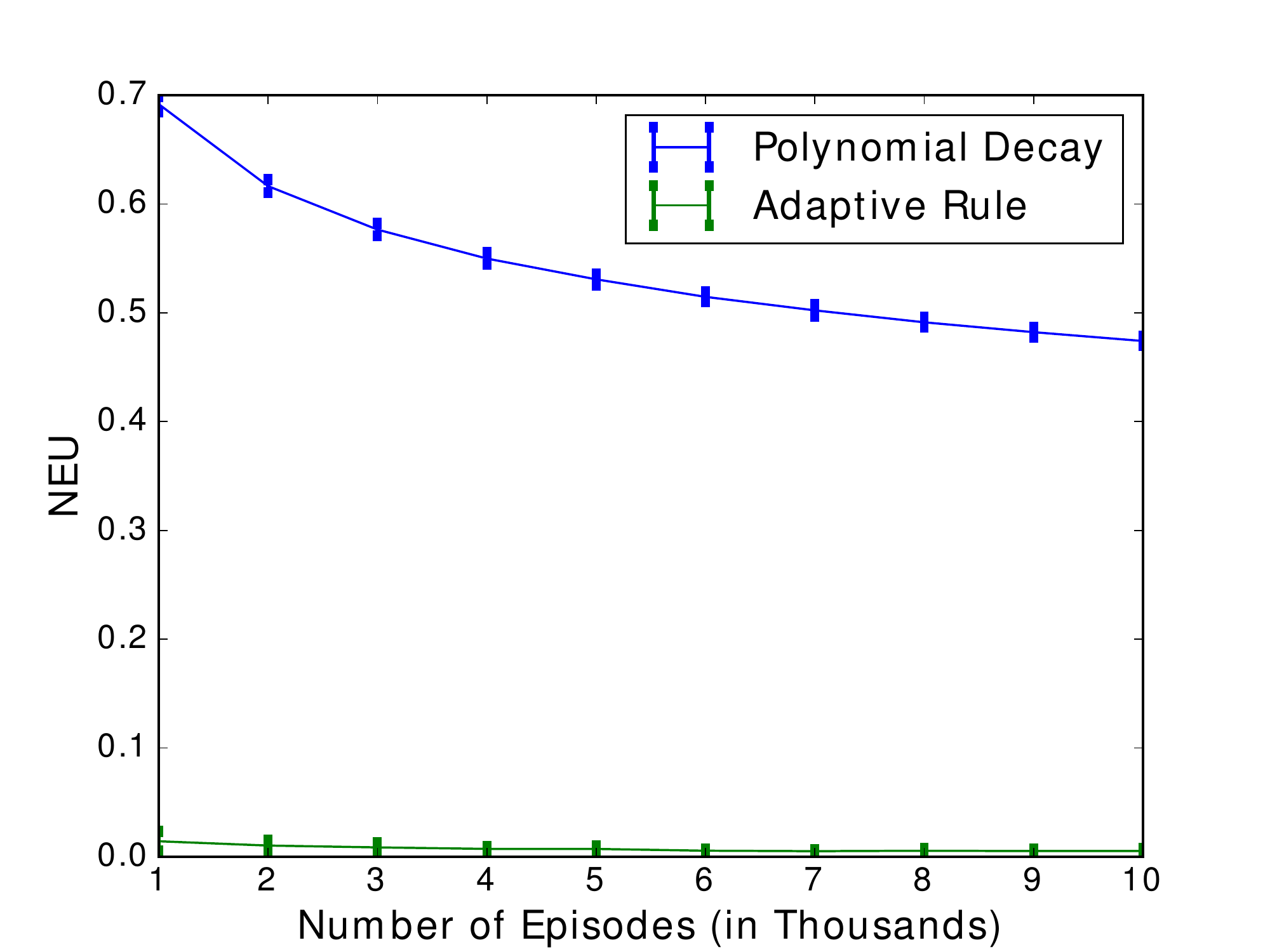}
         \caption{Mountain Car}
         \label{fig:mc}
     \end{subfigure}
     \hfill
     \begin{subfigure}[b]{0.48\textwidth}
         \centering
         \includegraphics[width=\textwidth]{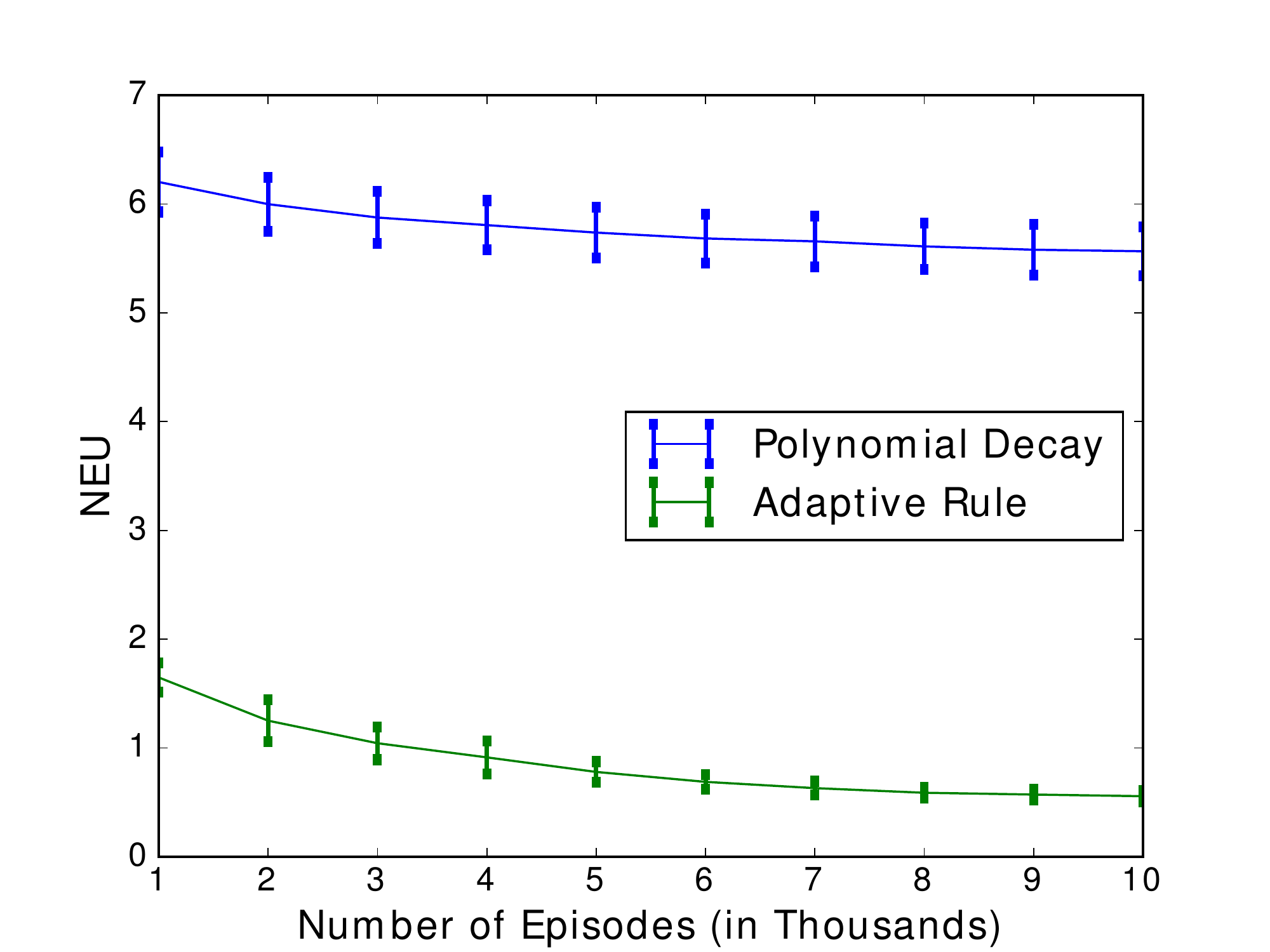}
         \caption{Inverted Pendulum}
         \label{fig:invertedpen}
     \end{subfigure}
        \caption{Performance of different learning rate rules in classic control problems.}
        \label{fig:classic-control}
\end{figure}

\subsection{Learning Rate Rules and Tuning}
\begin{enumerate}[leftmargin=*]
    \item The optimal polynomial decay rule suggested in \citep{dalal2017finitetwo} is the following: at time step $k$, choose $\epsilon_k^{\alpha} = \frac{1}{(k + 1)^{\alpha}}$ and $\epsilon_k^{\beta} = \frac{1}{(k + 1)^{\beta}}$, where $\alpha \rightarrow 1$ and $\beta \rightarrow \frac{2}{3}$. For our experiments, we chose $\alpha = 0.99$ and $\beta = 0.66$. This implies $\lambda = \frac{\alpha}{\beta} = 1.5$. Since the problems we considered require smaller initial step-sizes for convergence, we let $\epsilon_k^{\alpha} = \frac{\rho_0}{(k + 1)^{\alpha}}$ and $\epsilon_k^{\beta} = \frac{\rho_0}{(k + 1)^{\beta}}$ and did a grid search to determine the best $\rho_0$, i.e., the best initial learning rate. The following values for $\rho_0$ were found to be the best: \textit{Mountain Car} - $\rho_0 = 0.05$, \textit{Inverted Pendulum} - $\rho_0 = 0.2$.

    \item For our proposed adaptive learning rate rule, we fixed $\xi = 1.2, N = 200$ in both problems since we did not want the decay in the learning rate to be too aggressive and the resource consumption for slope computation to be high. We also set $\lambda = 1.5$ as in the polynomial decay case to have a fair comparison. We then fixed $\rho$ and conducted a grid search to find the best $\sigma$. Subsequently, we conducted a grid search over $\rho$. Interestingly, the adaptive learning rate rule was reasonably robust to the value of $\rho$. We used $\rho = 0.05$ in Inverted Pendulum and $\rho = 0.1$ in Mountain Car. Effectively, the only hyperparameter that affected the rule's performance significantly was $\sigma$. The following values for $\sigma$ were found to be the best: \textit{Mountain Car} - $\sigma = 0.001$, \textit{Inverted Pendulum} - $\sigma = 0.01$.
\end{enumerate}

\subsection{Results}
For each experiment, one run involved the following: $10,000$ episodes with the number of iterations in each episode being $50$ and $200$ for Inverted Pendulum and Mountain Car respectively. After every $1,000$ episodes, training/learning was paused and the NEU was computed by averaging over $1,000$ test episodes. We initialized $\Theta_0 = 0$. For Mountain Car, $50$ such runs were conducted and the results were computed by averaging over these runs. For Inverted Pendulum, $100$ runs were conducted and the results were computed by averaging over these runs. Note that the learning rate for each adaptive strategy was adapted at the episodic level due to the episodic nature of the problems. The results are reported in Figures \ref{fig:mc} and \ref{fig:invertedpen}. As is clear from the figures, our proposed adaptive learning rate rule significantly outperforms the optimal polynomial decay rule.

\section{Conclusion}
We have presented finite-time bounds quantifying the performance of two time-scale stochastic approximation algorithms. The bounds give insight into how the different time-scale and learning rate parameters affect the rate of convergence. We utilized these insights and design an adaptive learning rate selection rule. We implemented our rule on popular classical control problems in reinforcement learning and showed that the proposed rule significantly outperforms the optimal polynomial decay strategy suggested in literature.

\bibliographystyle{plainnat}
\bibliography{references}

\newpage 
\appendix
\section{Proof of Lemma \ref{lem:drift}}\label{app:lemma1proof}

The proof proceeds along similar lines as the corresponding proof in \citep{srikant2019finite}. However, the results there cannot be directly applied to get the bounds in this paper due to the fact that we would like to separate out the effects of the $\epsilon,$ $\alpha$ and $\beta$ from the other problem parameters, and additionally, the Lyapunov function used here is different.

Recall that  $$Z_k=V_k+\bar{A}_{vv}^{-1}\bar{A}_{vu}U_k,$$ so the stochastic recursions in terms of $(U, Z)$ are 
\begin{align*}
    U_{k + 1} &= U_k + \epsilon^{\alpha}\left({B}(X_k)U_k + {A}_{uv}(X_k)Z_k + {b}_u(X_k)\right)\\
    Z_{k + 1} &= Z_k + \bar{A}_{22}^{-1}\bar{A}_{21}(U_{k + 1} - U_k) + \epsilon^{\beta}\left(\Tilde{B}(X_k)U_k + {A}_{vv}(X_k)Z_k + {b}_{v}(X_k)\right)\\
    &= Z_k + \epsilon^{\alpha}\bar{A}_{22}^{-1}\bar{A}_{21}\left({B}(X_k)U_k + {A}_{uv}(X_k)Z_k + {b}_u(X_k)\right) \\
    &\hspace{4cm}+ \epsilon^{\beta}\left(\Tilde{B}(X_k)U_k + {A}_{vv}(X_k)Z_k + {b}_{v}(X_k)\right),
\end{align*} which can be written as a stochastic recursion in terms of $\Theta_k = (U_k, Z_k)$ as follows
\begin{equation}
    \Theta_{k+1}=\Theta_k+\epsilon^\alpha \left(\tilde{A}(X_k)+\tilde{b}(X_k)\right),
\end{equation} where 
\begin{align}
\Tilde{A}(X_k) =& \begin{pmatrix}
{B}(X_k)&{A}_{uv}(X_k)\\ \bar{A}_{vv}^{-1}\bar{A}_{vu}{B}(X_k) + \epsilon^{\beta - \alpha}\Tilde{B}(X_k)&\bar{A}_{vv}^{-1}\bar{A}_{vu}{A}_{uv}(X_k) + \epsilon^{\beta - \alpha}{A}_{vv}(X_k) \end{pmatrix}\\ \Tilde{b}(X_k) = &\begin{pmatrix}
{b}_{u}(X_k)\\ \bar{A}_{vv}^{-1}\bar{A}_{vu}{b}_{u}(X_k) + \epsilon^{\beta - \alpha}{b}_{v}(X_k) \end{pmatrix}.
\end{align}

We first establish a sequence of preliminary lemmas before we present the proof of Lemma \ref{lem:drift}.

\begin{lemma}\label{lem:normbound}
For any $k \geq 0$, the following inequalities hold: 
\begin{align*}
    \|\Tilde{A}(X_k)\| &\leq \delta,\\
    \|\bar{\tilde{A}}\| &\leq \delta,\\
    \|\Tilde{b}(X_k)\| &\leq \delta b_{\max},\\
    \bar{\tilde{b}}&=0,
\end{align*}
where $\delta = 2(1 + \|\bar{A}_{vv}^{-1}\bar{A}_{vu}\| +{\epsilon^{\beta - \alpha}}),$  $\bar{\tilde{A}}=\lim_{k\rightarrow\infty}\Tilde{A}(X_k),$ and $\bar{\tilde{b}}=\lim_{k\rightarrow\infty}\Tilde{b}(X_k).$  
\end{lemma}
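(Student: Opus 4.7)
The plan is to prove each of the four claims by a direct bookkeeping argument using the triangle inequality, submultiplicativity of the operator norm, and the uniform bounds from Assumption 2. There is no real obstacle here; the content is essentially showing that stacking a handful of norm-$\leq 1$ blocks together, with scalar multipliers $1$, $\|\bar{A}_{vv}^{-1}\bar{A}_{vu}\|$, and $\epsilon^{\beta-\alpha}$, produces a matrix whose norm is at most the sum of the block contributions.

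For the first claim, I would bound $\|\tilde{A}(X_k)\|$ by the sum of the operator norms of its four blocks (this uses the elementary fact that for a block matrix $M=(M_{ij})$ one has $\|M\|\leq \sum_{i,j}\|M_{ij}\|$). Applying Assumption 2 to each of $B(X_k)$, $A_{uv}(X_k)$, $\tilde{B}(X_k)$, $A_{vv}(X_k)$ gives $\|\cdot\|\leq 1$, and then submultiplicativity handles the factor $\bar{A}_{vv}^{-1}\bar{A}_{vu}$. Adding up the four blocks gives
\[
\|\tilde{A}(X_k)\|\;\leq\; 1+1+\bigl(\|\bar{A}_{vv}^{-1}\bar{A}_{vu}\|+\epsilon^{\beta-\alpha}\bigr)+\bigl(\|\bar{A}_{vv}^{-1}\bar{A}_{vu}\|+\epsilon^{\beta-\alpha}\bigr)\;=\;2\bigl(1+\|\bar{A}_{vv}^{-1}\bar{A}_{vu}\|+\epsilon^{\beta-\alpha}\bigr)=\delta.
\]

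For the second claim, I would observe that $\bar{\tilde{A}}$ has exactly the same block structure as $\tilde{A}(X_k)$ but with each $M(X_k)$ replaced by its steady-state limit $\bar{M}$. Assumption 2 explicitly states that the steady-state limits inherit the same norm bounds, so the identical block-by-block estimate yields $\|\bar{\tilde{A}}\|\leq \delta$.

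For the third claim, I would use $\|(x,y)\|_2\leq \|x\|+\|y\|$ together with the triangle inequality on the bottom block:
\[
\|\tilde{b}(X_k)\|\;\leq\; \|b_u(X_k)\|+\|\bar{A}_{vv}^{-1}\bar{A}_{vu}\|\,\|b_u(X_k)\|+\epsilon^{\beta-\alpha}\|b_v(X_k)\|\;\leq\; b_{\max}\bigl(1+\|\bar{A}_{vv}^{-1}\bar{A}_{vu}\|+\epsilon^{\beta-\alpha}\bigr)\;\leq\;\delta\,b_{\max}.
\]
Finally, for the fourth claim, Assumption 1 states $\bar{b}_u=\bar{b}_v=0$, so $\bar{\tilde{b}}$, being a linear combination of these limits, is identically zero. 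The hardest part is merely keeping the index bookkeeping straight; each inequality is immediate once the blocks are written out.
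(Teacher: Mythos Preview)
Your proof is correct and follows essentially the same approach as the paper: bound the block matrix (respectively, block vector) by the sum of the norms of its blocks, then apply the uniform bounds from Assumption~2 and submultiplicativity to each block. The paper in fact only writes out the first inequality in detail and dispatches the remaining three with ``Similarly,'' so your treatment of the last three claims is, if anything, more explicit than the original.
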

\begin{proof}
We begin by proving the first inequality: 
\begin{align}
    \begin{split}
    \|\Tilde{A}(X_k)\| \leq &\|{B}(X_k)\| + \|{A}_{uv}(X_k)\| + \|\bar{A}_{vv}^{-1}\bar{A}_{vu}{B}(X_k)\|+\epsilon^{\beta - \alpha}\|\tilde{B}(X_k)\|\\
    &+ \|\bar{A}_{vv}^{-1}\bar{A}_{vu}{A}_{uv}(X_k)\| +\epsilon^{\beta - \alpha}\|{A}_{vv}(X_k)\|\\
\leq& 1 + 1 + c + c + 2 \epsilon^{\beta - \alpha} \\
=& 2(c + 1 + {\epsilon^{\beta - \alpha}})
    \end{split}
\end{align}
where $c = \|\bar{A}_{vv}^{-1}\bar{A}_{vu}\|$ and the last inequality follows from the assumptions. Similarly, one can also show the remaining inequalities.
\end{proof}

\begin{lemma}\label{lem:diffbound}
For $\Theta_{\tau}$ and $\Theta_0$, the following inequalities hold: 
\begin{align*}
        \|\Theta_{\tau} - \Theta_0\| &\leq 2\tilde{\epsilon}\tau\|\Theta_0\| + 2\tilde{\epsilon}\tau b_{\max}\\
        \|\Theta_{\tau} - \Theta_0\| &\leq 4\tilde{\epsilon}\tau\|\Theta_{\tau}\| + 4\tilde{\epsilon}\tau b_{\max}\\
        \|\Theta_{\tau} - \Theta_0\|^2 &\leq 32\tilde{\epsilon}^2\tau^2\|\Theta_{\tau}\|^2 + 32\tilde{\epsilon}^2\tau^2b^2_{\max}
\end{align*}
where $\tilde{\epsilon} = \epsilon^{\alpha}\delta$.
\end{lemma}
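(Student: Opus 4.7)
The plan is to bound the per-step increment of $\Theta_k$ using Lemma~\ref{lem:normbound}, iterate it over $\tau$ steps, and then convert between bounds in terms of $\|\Theta_0\|$ and $\|\Theta_\tau\|$ by a triangle-inequality absorption argument. Specifically, from the recursion $\Theta_{k+1} = \Theta_k + \epsilon^\alpha(\tilde{A}(X_k)\Theta_k + \tilde{b}(X_k))$, Lemma~\ref{lem:normbound} yields $\|\Theta_{k+1} - \Theta_k\| \leq \tilde{\epsilon}\|\Theta_k\| + \tilde{\epsilon} b_{\max}$ and hence $\|\Theta_{k+1}\| \leq (1+\tilde{\epsilon})\|\Theta_k\| + \tilde{\epsilon} b_{\max}$.

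For the first inequality, I would iterate this latter bound to get $\|\Theta_k\| \leq (1+\tilde{\epsilon})^k \|\Theta_0\| + ((1+\tilde{\epsilon})^k - 1)b_{\max}$ for all $k \leq \tau$. Using $\tilde{\epsilon}\tau \leq 1/4$ gives $(1+\tilde{\epsilon})^\tau \leq e^{\tilde{\epsilon}\tau} \leq e^{1/4} < 2$, so $\|\Theta_k\| \leq 2\|\Theta_0\| + b_{\max}$ for $0 \leq k \leq \tau - 1$. Telescoping $\|\Theta_\tau - \Theta_0\| \leq \sum_{k=0}^{\tau-1}\|\Theta_{k+1} - \Theta_k\|$ and plugging in the per-step bound then gives $\|\Theta_\tau - \Theta_0\| \leq \tilde{\epsilon}\tau(2\|\Theta_0\| + b_{\max}) + \tilde{\epsilon}\tau b_{\max} = 2\tilde{\epsilon}\tau\|\Theta_0\| + 2\tilde{\epsilon}\tau b_{\max}$, as desired.

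For the second inequality, I would use the triangle inequality $\|\Theta_0\| \leq \|\Theta_\tau\| + \|\Theta_\tau - \Theta_0\|$ and plug it into the first bound to obtain $(1 - 2\tilde{\epsilon}\tau)\|\Theta_\tau - \Theta_0\| \leq 2\tilde{\epsilon}\tau\|\Theta_\tau\| + 2\tilde{\epsilon}\tau b_{\max}$; since $\tilde{\epsilon}\tau \leq 1/4$ implies $1 - 2\tilde{\epsilon}\tau \geq 1/2$, dividing through yields the factor of $4$ bound. The third inequality follows immediately by squaring the second and applying $(a+b)^2 \leq 2a^2 + 2b^2$.

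There is no real obstacle here; the only subtlety is making sure one uses the assumption $\tilde{\epsilon}\tau \leq 1/4$ at exactly the right places (once to control $(1+\tilde{\epsilon})^\tau$ by a constant and once to invert $1 - 2\tilde{\epsilon}\tau$), and noting that the $\tilde{b}$ bound in Lemma~\ref{lem:normbound} is $\tilde{\epsilon} b_{\max}/\epsilon^\alpha = \delta b_{\max}$ so that multiplication by $\epsilon^\alpha$ in the recursion reconstitutes the $\tilde{\epsilon} b_{\max}$ increment cleanly.
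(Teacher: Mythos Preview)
Your proposal is correct and follows essentially the same approach as the paper: the paper establishes the per-step bound $\|\Theta_{k+1}-\Theta_k\|\leq \tilde{\epsilon}(\|\Theta_k\|+b_{\max})$ via Lemma~\ref{lem:normbound} and then defers the remaining steps to Lemma~3 of \cite{srikant2019finite}, which are exactly the iteration, telescoping, triangle-inequality absorption, and squaring steps you spell out. Your write-up in fact supplies the details the paper omits, with the assumption $\tilde{\epsilon}\tau\leq 1/4$ invoked at the same two places.
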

\begin{proof}
Recall that $\delta = 2\left(1+\|\bar{A}_{vv}^{-1}\bar{A}_{vu}\| + \frac{\epsilon^{\beta - \alpha}}{2}\right)$, therefore we have $\tilde{\epsilon} = \epsilon^{\alpha}\delta$. By applying Lemma \ref{lem:normbound}, we obtain 
\begin{align}\label{eq:firstbound}
    \begin{split}
        \|\Theta_{k + 1} - \Theta_k\| = \epsilon^{\alpha}\|\tilde{A}(X_k)\Theta_k + \tilde{b}(X_k)\| \leq \tilde{\epsilon}(\|\Theta_k\| + b_{\max}).
    \end{split}
\end{align}
The result then follows from the steps in the proof of Lemma 3 in \citep{srikant2019finite}.
\end{proof}

\begin{lemma}\label{lem:seconddriftterm}
For any $k \geq 0$, the following inequality holds
\begin{align*}
    \left|(\Theta_{k + 1} - \Theta_k)^\top P(\Theta_{k + 1} - \Theta_k)\right| \leq 2\tilde{\epsilon}^2\gamma_{\max} (\|\Theta_k\|^2 + b^2_{\max}).
\end{align*}
\end{lemma}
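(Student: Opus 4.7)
The plan is to reduce the quadratic form to the squared norm of the one-step increment and then control that squared norm using the bounds from Lemma \ref{lem:normbound}. Since $P$ defined in \eqref{eq:P} is block-diagonal with $P_u$ and $P_v$ as (scaled) blocks, both of which are real symmetric positive definite with largest eigenvalue at most $\gamma_{\max}$, the matrix $P$ itself is symmetric PSD with top eigenvalue bounded by $\gamma_{\max}$. Hence for any vector $y$, $|y^\top P y| \leq \gamma_{\max} \|y\|^2$, and in particular
\begin{equation*}
\left|(\Theta_{k+1}-\Theta_k)^\top P (\Theta_{k+1}-\Theta_k)\right| \leq \gamma_{\max}\,\|\Theta_{k+1}-\Theta_k\|^2.
\end{equation*}

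Next I would plug in the stochastic recursion $\Theta_{k+1}-\Theta_k = \epsilon^{\alpha}\bigl(\tilde{A}(X_k)\Theta_k + \tilde{b}(X_k)\bigr)$ and apply Lemma \ref{lem:normbound}, which gives $\|\tilde{A}(X_k)\| \leq \delta$ and $\|\tilde{b}(X_k)\| \leq \delta b_{\max}$. By the triangle inequality,
\begin{equation*}
\|\Theta_{k+1}-\Theta_k\| \leq \epsilon^{\alpha}\bigl(\|\tilde{A}(X_k)\|\,\|\Theta_k\| + \|\tilde{b}(X_k)\|\bigr) \leq \epsilon^{\alpha}\delta\bigl(\|\Theta_k\| + b_{\max}\bigr) = \tilde{\epsilon}\bigl(\|\Theta_k\| + b_{\max}\bigr),
\end{equation*}
recalling $\tilde{\epsilon} = \epsilon^{\alpha}\delta$. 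Squaring and using the elementary inequality $(a+b)^2 \leq 2a^2 + 2b^2$ yields
\begin{equation*}
\|\Theta_{k+1}-\Theta_k\|^2 \leq 2\tilde{\epsilon}^2\bigl(\|\Theta_k\|^2 + b_{\max}^2\bigr).
\end{equation*}

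Combining the two displays gives the claimed bound $2\tilde{\epsilon}^2\gamma_{\max}(\|\Theta_k\|^2 + b_{\max}^2)$. There is essentially no obstacle here; the only mild subtlety is to notice that $P$ is block-diagonal with the two diagonal blocks being convex-combination-scaled versions of $P_u$ and $P_v$, so its eigenvalues are the eigenvalues of those blocks scaled by factors in $(0,1)$, and are thus still bounded above by $\gamma_{\max}$. The rest is bookkeeping via Lemma \ref{lem:normbound} and the definition $\tilde{\epsilon}=\epsilon^{\alpha}\delta$.
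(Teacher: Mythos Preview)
Your proof is correct and follows essentially the same route as the paper: bound the quadratic form by $\gamma_{\max}\|\Theta_{k+1}-\Theta_k\|^2$, then use Lemma~\ref{lem:normbound} (equivalently, the bound \eqref{eq:firstbound}) to get $\|\Theta_{k+1}-\Theta_k\|\le \tilde{\epsilon}(\|\Theta_k\|+b_{\max})$, and finish with $(a+b)^2\le 2a^2+2b^2$. The only extra content you add is the explicit justification that the block-diagonal $P$ has eigenvalues bounded by $\gamma_{\max}$, which the paper simply asserts in the notation section.
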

\begin{proof}
The lemma follows directly from \eqref{eq:firstbound}: 
\begin{align*}
    \left|(\Theta_{k + 1} - \Theta_k)^\top P(\Theta_{k + 1} - \Theta_k)\right| &\leq \gamma_{\max}\|\Theta_{k + 1} - \Theta_k\|^2\\
    &\leq \tilde{\epsilon}^{2}\gamma_{\max}(\|\Theta_k\| + b_{\max})^2\\
    &\leq 2\tilde{\epsilon}^2\gamma_{\max}(\|\Theta_k\|^2 + b^2_{\max}).
\end{align*}

\end{proof}
\begin{lemma}\label{lem:king}
For all $k \geq \tau$, the following inequality holds:
\begin{align*}
&\left|\E\left[\left.\Theta_k^\top P\left(\bar{\tilde{A}}\Theta_k - \frac{1}{\epsilon^{\alpha}}(\Theta_{k + 1} - \Theta_k)\right)\right|\Theta_{k - \tau}, X_{k - \tau}\right]\right|\\ 
\leq & 10\tilde{\epsilon}\tau\gamma_{\max}(1 + 6\delta)(1 + b_{\max}) \left(\E[\|\Theta_{k}\|^2|\Theta_{k - \tau}, X_{k - \tau}] + (1 + b_{\max})^2\right)\\
=& \tilde{\eta}_1\tilde{\epsilon}\tau\E[\|\Theta_{k}\|^2|\Theta_{k - \tau}, X_{k - \tau}] + \tilde{\eta}_2\tilde{\epsilon}\tau.
\end{align*}
\end{lemma}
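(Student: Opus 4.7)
The plan is to exploit Markovian mixing: conditioned on the state $\tau$ steps in the past, expectations of $\tilde{A}(X_k)$ and $\tilde{b}(X_k)$ are close to their steady-state values (by Assumption~3), while $\Theta_{k-\tau}$ is already close to $\Theta_k$ (by Lemma~\ref{lem:diffbound}). First, I use the stochastic recursion $\Theta_{k+1}-\Theta_k = \epsilon^{\alpha}(\tilde{A}(X_k)\Theta_k + \tilde{b}(X_k))$ to rewrite the quantity inside the conditional expectation as $\Theta_k^\top P\bigl[(\bar{\tilde{A}}-\tilde{A}(X_k))\Theta_k - \tilde{b}(X_k)\bigr]$. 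Then I substitute $\Theta_k = \Theta_{k-\tau} + (\Theta_k - \Theta_{k-\tau})$ and expand, producing a main term $\Theta_{k-\tau}^\top P\bigl[(\bar{\tilde{A}}-\tilde{A}(X_k))\Theta_{k-\tau} - \tilde{b}(X_k)\bigr]$ together with three cross terms, each carrying at least one factor of $\Theta_k - \Theta_{k-\tau}$.

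For the main term, I pull $\Theta_{k-\tau}$ outside the conditional expectation (it is measurable with respect to the conditioning), reducing the bound to $\gamma_{\max}\|\Theta_{k-\tau}\|\bigl(\|\bar{\tilde{A}} - \E[\tilde{A}(X_k)|X_{k-\tau}]\|\,\|\Theta_{k-\tau}\| + \|\E[\tilde{b}(X_k)|X_{k-\tau}]\|\bigr)$. Assumption~3 supplies mixing bounds of $\Delta$ for each constituent block $B$, $\tilde{B}$, $A_{uv}$, $A_{vv}$; assembling these through the block structure of $\tilde{A}$ — which multiplies two blocks by $\|\bar{A}_{vv}^{-1}\bar{A}_{vu}\|$ and two by $\epsilon^{\beta-\alpha}$ — gives $\|\bar{\tilde{A}} - \E[\tilde{A}(X_k)|X_{k-\tau}]\| \leq \Delta\delta = \tilde{\epsilon}\delta$ and similarly $\|\E[\tilde{b}(X_k)|X_{k-\tau}]\| \leq \tilde{\epsilon}\delta b_{\max}$ (using $\bar{\tilde{b}}=0$). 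Using $\tau \geq 1$ absorbs the $\tilde{\epsilon}\delta$ factor into a $\tilde{\epsilon}\tau\delta$ factor, so the main-term contribution is of order $\gamma_{\max}\tilde{\epsilon}\tau\delta(\|\Theta_{k-\tau}\|^2 + b_{\max}\|\Theta_{k-\tau}\|)$.

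For the three cross terms, Lemma~\ref{lem:normbound} provides the deterministic norm bounds $\|\bar{\tilde{A}}-\tilde{A}(X_k)\| \leq 2\delta$ and $\|\tilde{b}(X_k)\| \leq \delta b_{\max}$, while Lemma~\ref{lem:diffbound} provides $\|\Theta_k - \Theta_{k-\tau}\| \leq 4\tilde{\epsilon}\tau\|\Theta_k\| + 4\tilde{\epsilon}\tau b_{\max}$. Applying these term by term (with Cauchy--Schwarz to split bilinear forms under $P$) yields three bounds of order $\gamma_{\max}\tilde{\epsilon}\tau\delta$ times a homogeneous quadratic in $\|\Theta_k\|$, $\|\Theta_{k-\tau}\|$, and $b_{\max}$. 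I then use $\|\Theta_{k-\tau}\| \leq \|\Theta_k\| + \|\Theta_k-\Theta_{k-\tau}\|$ together with Lemma~\ref{lem:diffbound} once more, and $\tilde{\epsilon}\tau \leq 1/4$ from Assumption~4, to re-express everything purely in terms of $\|\Theta_k\|$ and $b_{\max}$; the elementary inequalities $2xy \leq x^2+y^2$ and $(x+y)^2 \leq 2(x^2+y^2)$ then collapse the bound into the form $\tilde{\eta}_1\tilde{\epsilon}\tau\,\E[\|\Theta_k\|^2\mid\Theta_{k-\tau},X_{k-\tau}] + \tilde{\eta}_2\tilde{\epsilon}\tau$, and one reads off the explicit constants.

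The main obstacle is bookkeeping: there are four pieces, each splitting further after norms are taken, and getting the compact constant $10\gamma_{\max}(1+6\delta)(1+b_{\max})$ requires tracking numerical factors carefully without overcounting. A subtler point is that Assumption~3 provides mixing bounds only on the original matrices $B, \tilde{B}, A_{uv}, A_{vv}$, not on $\tilde{A}$ itself; it is the assembly through $\tilde{A}$'s block structure that injects an extra factor of $\delta$ into the mixing norm, and this is precisely what forces the $(1+6\delta)$ factor in the final constant.
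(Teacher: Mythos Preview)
Your proposal is correct and follows essentially the same route as the paper: rewrite via the recursion as $\Theta_k^\top P[(\bar{\tilde A}-\tilde A(X_k))\Theta_k-\tilde b(X_k)]$, decompose $\Theta_k=\Theta_{k-\tau}+(\Theta_k-\Theta_{k-\tau})$, handle the leading term with the mixing bounds from Assumption~3 (assembled through the block structure of $\tilde A$) and the cross terms with Lemmas~\ref{lem:normbound} and~\ref{lem:diffbound}, then convert back to $\|\Theta_k\|$ using Lemma~\ref{lem:diffbound} and $\tilde\epsilon\tau\le 1/4$. The only minor discrepancy is in constant bookkeeping---the paper records the mixing contribution to the leading term as $\tilde\epsilon\gamma_{\max}\|\Theta_{k-\tau}\|^2$ rather than your $\tilde\epsilon\delta\gamma_{\max}(\cdots)$---but this is absorbed into the final $(1+6\delta)$ factor either way.
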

\begin{proof}
For ease of notation, we prove the lemma for $k = \tau$, but the proof for any $k \geq \tau$ is identical. We consider
\begin{align}\label{eq:lemma4split}
    \begin{split}
        &\E\left[\left.\Theta_{\tau}^\top P\left(\bar{\tilde{A}}\Theta_{\tau} - \frac{1}{\epsilon^{\alpha}}(\Theta_{\tau + 1} - \Theta_{\tau})\right)\right|\Theta_{0}, X_{0}\right] \\
        = &\E\left[\left.\Theta_{\tau}^\top P\left(\bar{\tilde{A}}\Theta_{\tau}- (\Tilde{A}(X_{\tau})\Theta_{\tau} + \Tilde{b}(X_{\tau}))\right)\right|\theta_{0}, X_{0}\right]\\
        =& \E\left[\left.\Theta_{\tau}^\top P\left(\bar{\tilde{A}} - \Tilde{A}(X_{\tau})\right)\Theta_{\tau}\right|\Theta_{0}, X_{0}\right] - \E\left[\left.\Theta_{\tau}^\top P\Tilde{b}(X_{\tau})\right|\Theta_{0}, X_{0}\right].
    \end{split}
\end{align}
We first consider the first term on the RHS of the above equation:
\begin{align}\label{eq:firsttermsplit}
    \begin{split}
        & \E\left[\left.\Theta_{\tau}^\top P\left(\bar{\tilde{A}} - \Tilde{A}(X_{\tau})\right)\Theta_{\tau}\right|\Theta_{0}, X_{0}\right]\\
        =& \E\left[\left.\Theta_0^\top P\left(\bar{\tilde{A}} - \Tilde{A}(X_{\tau})\right)\Theta_0\right|\Theta_{0}, X_{0}\right] + \E\left[\left.(\Theta_{\tau} - \Theta_{0})^\top P\left(\bar{\tilde{A}} - \Tilde{A}(X_{\tau})\right)(\Theta_{\tau} - \Theta_{0})\right|\Theta_{0}, X_{0}\right]\\
        &+ \E\left[\left.(\Theta_{\tau} - \Theta_{0})^\top P\left(\bar{\tilde{A}} - \Tilde{A}(X_{\tau})\right)\Theta_{0}\right|\Theta_{0}, X_{0}\right] + \E\left[\left.\Theta_{0}^\top  P\left(\bar{\tilde{A}} - \Tilde{A}(X_{\tau})\right) (\Theta_{\tau} - \Theta_{0})\right|\Theta_{0}, X_{0}\right].
    \end{split}
\end{align}
We will now analyze each term on the RHS above. Starting with the first term:
\begin{align}\label{eq:firsttermfirst}
    \begin{split}
      \E\left[\left.\Theta_0^\top P\left(\bar{\tilde{A}} - \Tilde{A}(X_{\tau})\right)\Theta_0\right|\Theta_{0}, X_{0}\right] =&\left|\Theta_0^\top P\left(\bar{\tilde{A}}  - \E[\Tilde{A}(X_{\tau})|X_0]\right)\Theta_0\right|\\
       \leq& \left\|\Theta_0^\top P\right\|\left\|\left(\bar{\tilde{A}}  - \E[\Tilde{A}(X_{\tau})|X_0]\right)\Theta_0\right\|\\
       \leq &\tilde{\epsilon} \gamma_{\max}\|\Theta_0\|^2
    \end{split},
\end{align}
where the final inequality follows from the assumptions on the mixing time $\tau$ and the fact that $\left\Vert\begin{pmatrix}1&1\\\bar{A}_{vv}^{-1}\bar{A}_{vu} &\bar{A}_{vv}^{-1}\bar{A}_{vu} + \epsilon^{\beta - \alpha}\end{pmatrix}\right\Vert \leq \delta = 2(1 + \|\bar{A}_{vv}^{-1}\bar{A}_{vu}\| + {\epsilon^{\beta - \alpha}})$. Next, we bound the second term on the RHS of \eqref{eq:firsttermsplit}:
\begin{align}\label{eq:firsttermsecond}
    \begin{split}
        &\left|\E\left[\left.(\Theta_{\tau} - \Theta_{0})^\top P\left(\bar{\tilde{A}} - \Tilde{A}(X_{\tau})\right)(\Theta_{\tau} - \Theta_{0})\right|\Theta_{0}, X_{0}\right]\right| \\
       \leq  & \E\big[\left.\|(\Theta_{\tau} - \Theta_{0})^\top P\|\|\left(\bar{\tilde{A}} - \Tilde{A}(X_{\tau})\right)(\Theta_{\tau} - \Theta_{0})\|\right|\Theta_{0}, X_{0}\big]\\
       \leq & \gamma_{\max}\E\big[\left.(\|\bar{\tilde{A}}\| + \|\Tilde{A}(X_{\tau})\|)\|\Theta_{\tau} - \Theta_0\|^2\right|\Theta_0, X_0\big]\\
        \leq &2\delta\gamma_{\max}\E\big[\|\Theta_{\tau} - \Theta_0\|^2|\Theta_0, X_0\big]
    \end{split}
\end{align}
where the last inequality follows from Lemma \ref{lem:normbound}. Finally, we bound the third and fourth terms on the RHS of \eqref{eq:firsttermsplit}:
\begin{align}\label{eq:firsttermthourth}
    \begin{split}
        &\bigg|\E\big[(\Theta_{\tau} - \Theta_{0})^\top P\big(\bar{\tilde{A}} - \Tilde{A}(X_{\tau})\big)\Theta_{0}|\Theta_{0}, X_{0}\big]\bigg| + \bigg|\E\big[\Theta_{0}^\top P\big(\bar{\tilde{A}} - \Tilde{A}(X_{\tau})\big)(\Theta_{\tau} - \Theta_{0})|\Theta_{0}, X_{0}\big]\bigg|\\
    \leq    & 4\delta\gamma_{\max}\|\Theta_0\|\E[\|\Theta_{\tau} - \Theta_0\||\Theta_0, X_0]\\
        \leq &8\tilde{\epsilon}\delta\tau\gamma_{\max}\|\Theta_0\|(\|\Theta_0\| + b_{\max})\\
        \leq& 8\tilde{\epsilon}\delta\tau\gamma_{\max}\|\Theta_0\|^2 + 8\epsilon'\delta\tau\gamma_{\max}\|\Theta_0\|b_{\max}
    \end{split}
\end{align}
where the first inequality follows from Lemma \ref{lem:normbound} and the second inequality follows from Lemma \ref{lem:diffbound}.

Next we consider the second term on the RHS of \eqref{eq:lemma4split}:
\begin{align}\label{eq:secondtermlem4}
    \begin{split}
        &\bigg|-\E\big[\Theta_{\tau}^\top P\Tilde{b}(X_{\tau})|\Theta_{0}, X_{0}\big]\bigg|\\
        =& \bigg|-\E\big[\Theta_{0}^\top P\Tilde{b}(X_{\tau})|\Theta_{0}, X_{0}\big] - \E\big[(\Theta_{\tau} - \Theta_0)^\top P\Tilde{b}(X_{\tau})|\Theta_{0}, X_{0} \big]\bigg|\\
        \leq& \tilde{\epsilon}\gamma_{\max}\|\Theta_0\| + \gamma_{\max}b_{\max}\E[\|\Theta_{\tau} - \Theta_0\||\Theta_0, X_0]\\
        \leq& \tilde{\epsilon}\gamma_{\max}\|\Theta_0\| + 2\tilde{\epsilon}\tau\gamma_{\max}b_{\max}(\|\Theta_0\| + b_{\max})
    \end{split}
\end{align}
where the final inequality follows from Lemma \ref{lem:diffbound}.

Now, combining \eqref{eq:firsttermfirst} - \eqref{eq:secondtermlem4}, we get
\begin{align}
    \begin{split}
        &\bigg|\E\big[\Theta_k^\top P\left(\bar{\tilde{A}}\Theta_k - \frac{1}{\epsilon^{\alpha}}(\Theta_{k + 1} - \Theta_k)\right)|\Theta_{k - \tau}, X_{k - \tau}\big]\bigg|\\
        \leq& \big(\tilde{\epsilon}\gamma_{\max}+ 8\tilde{\epsilon}\delta\tau\gamma_{\max}\big)\|\Theta_0\|^2 + 2\tilde{\epsilon}\tau\gamma_{\max}b^2_{\max} \\
        &+ \big(8\tilde{\epsilon}\delta\tau\gamma_{\max}b_{\max} + \tilde{\epsilon}\gamma_{\max} + 2\tilde{\epsilon}\tau\gamma_{\max}b_{\max}\big)\|\Theta_0\|\\
        & + 2\delta\gamma_{\max}\E[\|\Theta_{\tau} - \Theta_0\|^2|\Theta_0, X_0] \\
        \leq& \big(2\tilde{\epsilon}\gamma_{\max} + 8\tilde{\epsilon}\delta\tau\gamma_{\max} + \tilde{\epsilon}\tau\gamma_{\max}b_{\max} + 4\tilde{\epsilon}\delta\tau\gamma_{\max}b_{\max} \big)\|\Theta_0\|^2\\
        & +2\tilde{\epsilon}\tau\gamma_{\max}b_{\max} + 4\tilde{\epsilon}\delta\tau\gamma_{\max}b_{\max} + \tilde{\epsilon}\gamma_{\max} + 2\tilde{\epsilon}\tau\gamma_{\max}b^2_{\max}\\
        & + 2\delta\gamma_{\max}\E[\|\Theta_{\tau} - \Theta_0\|^2|\Theta_0, X_0]\\
        \leq& \big(2\tilde{\epsilon}\tau\gamma_{\max}(1 + 4\delta)(1 + b_{\max}) \big)\|\Theta_0\|^2 + \tilde{\epsilon}\tau\gamma_{\max}\big((2b_{\max} + 1)^2 + 4\delta b_{\max}\big)\\
        &+ 2\delta\gamma_{\max}\E[\|\Theta_{\tau} - \Theta_0\|^2|\Theta_0, X_0]\\
        \leq& \big(2\tilde{\epsilon}\tau\gamma_{\max}(1 + 4\delta)(1 + b_{\max}) \big)\E[\|\Theta_{\tau}\|^2|\Theta_0, X_0]
        \\& + \tilde{\epsilon}\tau\gamma_{\max}\big((2b_{\max} + 1)^2 + 4\delta b_{\max}\big)\\
        & + \big(\gamma_{\max}(1 + 6\delta)(1 + b_{\max}) \big)\E[\|\Theta_{\tau} - \Theta_0\|^2|\Theta_0, X_0]\\
        \leq& \big(2\tilde{\epsilon}\tau\gamma_{\max}(1 + 4\delta)(1 + b_{\max}) \big)\E[\|\Theta_{\tau}\|^2|\Theta_0, X_0]
        \\& + \tilde{\epsilon}\tau\gamma_{\max}\big((2b_{\max} + 1)^2 + 4\delta b_{\max}\big)\\
        & + \big(\gamma_{\max}(1 + 6\delta)(1 + b_{\max}) \big)\big(32\tilde{\epsilon}^2\tau^2\E[\|\Theta_{\tau}\|^2 |\Theta_0, X_0] + 32\tilde{\epsilon}^2\tau^2b^2_{\max}\big)\\
        \leq& 10\tilde{\epsilon}\tau\gamma_{\max}(1 + 6\delta)(1 + b_{\max}) \E[\|\Theta_{\tau}\|^2|\Theta_0, X_0]\\& + 10\tilde{\epsilon}\tau\gamma_{\max}(1 + 6\delta)(1 + b_{\max})^3
        \end{split}
\end{align}
where the second inequality follows from the fact that $2\|\theta_0\| \leq 1 + \|\theta_0\|^2$ and $\tau \geq 1$, the fourth inequality follows from the triangle inequality and the penultimate inequality follows from Lemma \ref{lem:diffbound}.
\end{proof}

Next we lower bound the minimum eigenvalue of the matrix-valued function $\Psi(\cdot)$.
\begin{lemma}\label{lem:eigenlb}
Let $\Psi(\mu) = \begin{pmatrix} \frac{\xi_2}{\xi_1 + \xi_2}&-\frac{\xi_1\xi_2}{\xi_1 + \xi_2}\\-\frac{\xi_1\xi_2}{\xi_1 + \xi_2}&\frac{1}{\mu}\frac{\xi_1}{\xi_1 + \xi_2} - \frac{\mu\nu\xi_1}{\xi_1 + \xi_2}\end{pmatrix}$ with $\xi_1, \xi_2, \nu > 0$ and $\mu \geq 0$. Then, the following holds
\begin{align*}
    \lambda_{\min}(\Psi(\mu)) \geq \kappa_1 - \kappa_2\mu
\end{align*}
where $\kappa_1 = \frac{\xi_2}{\xi_1 + \xi_2}$ and $\kappa_2$ is a constant that depends only on $\xi_1, \xi_2$ and $\nu$. 
\end{lemma}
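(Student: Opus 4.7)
The plan is to recast the bound $\lambda_{\min}(\Psi(\mu)) \geq \kappa_1 - \kappa_2\mu$ as the positive semidefiniteness condition $\Psi(\mu) - (\kappa_1 - \kappa_2\mu)I \succeq 0$. Since $\Psi(\mu)$ is a $2 \times 2$ symmetric matrix, this is equivalent to the shifted matrix having both nonnegative trace and nonnegative determinant. Verifying these two scalar conditions reduces to polynomial inequalities in $\mu$, which can be handled uniformly by a discriminant argument.

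Write $a = \kappa_1 = \xi_2/(\xi_1+\xi_2)$, $b = \xi_1\xi_2/(\xi_1+\xi_2)$, and $d(\mu) = \xi_1/[\mu(\xi_1+\xi_2)] - \mu\nu\xi_1/(\xi_1+\xi_2)$, so the diagonal of $\Psi(\mu) - (\kappa_1 - \kappa_2\mu)I$ is $(\kappa_2\mu,\ d(\mu) - a + \kappa_2\mu)$ with off-diagonal $-b$. Multiplying the trace condition $d(\mu) - a + 2\kappa_2\mu \geq 0$ through by $\mu(\xi_1+\xi_2) > 0$ yields the quadratic
\[[2\kappa_2(\xi_1+\xi_2) - \nu\xi_1]\mu^2 - \xi_2\mu + \xi_1 \geq 0,\]
while expanding the determinant condition $\kappa_2\mu(d(\mu) - a + \kappa_2\mu) \geq b^2$ yields
\[\kappa_2\bigl[\kappa_2 - \tfrac{\nu\xi_1}{\xi_1+\xi_2}\bigr]\mu^2 - \tfrac{\kappa_2\xi_2}{\xi_1+\xi_2}\mu + \bigl[\tfrac{\kappa_2\xi_1}{\xi_1+\xi_2} - b^2\bigr] \geq 0.\]

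Each of these has the form $A\mu^2 - B\mu + C$ with $B > 0$, and is nonnegative on $[0, \infty)$ provided $A > 0$, $C \geq 0$, and the discriminant $B^2 - 4AC \leq 0$. For $\kappa_2$ chosen large enough as a function of $\xi_1, \xi_2, \nu$, the leading coefficients become positive, the constant term of the determinant quadratic becomes positive (since $\kappa_2\xi_1/(\xi_1+\xi_2)$ eventually dominates $b^2$), and both discriminants turn nonpositive. Taking $\kappa_2$ to be the maximum of the two resulting thresholds then produces a constant depending only on $\xi_1, \xi_2, \nu$ that satisfies both inequalities for every $\mu \geq 0$.

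The main obstacle is the intermediate range of $\mu$. For small $\mu$, the $1/\mu$ term in $d(\mu)$ dominates and both inequalities are trivially satisfied; for large $\mu$, the $\mu^2$ coefficients dominate once $\kappa_2 > \nu\xi_1/(\xi_1+\xi_2)$, and the conditions again hold easily. The subtle part is the middle range, where the linear-in-$\mu$ terms $-\xi_2\mu$ and $-\kappa_2\xi_2\mu/(\xi_1+\xi_2)$ could in principle drive the quadratics negative. The discriminant argument handles this uniformly in one shot, which is why the PSD reformulation is the right route rather than a regime-by-regime expansion of $\lambda_{\min}$ via the explicit $2\times 2$ eigenvalue formula.
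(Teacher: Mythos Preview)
Your approach is correct and takes a genuinely different route from the paper's. The paper works directly with the explicit $2\times 2$ eigenvalue formula, writes the square-root term (after clearing the $1/\mu$) as $f(\mu)=\sqrt{(c^2+4\xi_2^2)\mu^2-2c\mu+1}$ with $c=\nu+\xi_2/\xi_1$, computes $f(0)=1$ and $f'(0)=-c$, and then invokes a uniform upper bound on $f''$ over $[0,\infty)$ to obtain the second-order Taylor inequality $f(\mu)\leq 1 - c\mu + \kappa_2\mu^2$; substituting this back and simplifying yields the claim. Your PSD reformulation followed by a discriminant argument is purely algebraic and makes the admissible threshold for $\kappa_2$ more explicit, since the two quadratic conditions can be solved for $\kappa_2$ directly, whereas the paper's $\kappa_2$ is defined only implicitly as $\tfrac12\sup_{\mu\geq 0} f''(\mu)$. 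One small correction to your closing commentary: the paper \emph{does} use the explicit eigenvalue formula you set aside, but it handles the intermediate range of $\mu$ via the global Taylor bound on $f$ rather than a regime-by-regime split, so both routes in fact avoid case analysis.
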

\begin{proof}
The minimum eigenvalue of a $2\times 2$ matrix $\begin{pmatrix}a&b\\c&d\end{pmatrix}$ is $$\frac{1}{2}(a + d - \sqrt{(a - d)^2 + 4 b c}),$$ so we have 
\begin{align}\label{eq:eigvallb}
    \begin{split}
        \lambda_{\min}(\Psi(\mu)) &= \frac{1}{2}\bigg(\frac{\xi_2}{\xi_1 + \xi_2} + \frac{\xi_1}{\xi_1 + \xi_2}(\frac{1}{\mu} - \nu)\\&\hspace{1cm} - \frac{\xi_1}{\xi_1 + \xi_2}\sqrt{\big(\frac{\xi_2}{\xi_1} - (\frac{1}{\mu} - \nu)\big)^2 + (2\xi_2)^2} \bigg)
    \end{split}
\end{align}
In order to obtain a lower bound on $\lambda_{\min}(\Psi(\mu))$, we first establish an upper bound on the third term on the RHS in the above equation. Defining $f(\mu) = \mu\sqrt{\big(\frac{\xi_2}{\xi_1} - (\frac{1}{\mu} - \nu)\big)^2 + (2\xi_2)^2}$, we have
\begin{align}
    \begin{split}
        f'(0) &= -(\nu + \frac{\xi_2}{\xi_1})\\
        f''(\mu) &= \frac{(\nu + \frac{\xi_2}{\xi_1})^2 + 4\xi_2^2}{f(\mu)^2} - \frac{f'(\mu)^2}{f(\mu)}\\
        &\leq \max_{\mu \geq 0}\frac{(\nu + \frac{\xi_2}{\xi_1})^2 + 4\xi_2^2}{f(\mu)^2} - \frac{f'(\mu)^2}{f(\mu)} = 2\kappa_2 < \infty
    \end{split},
\end{align}
which implies that 
\begin{align}
    \begin{split}
        f(\mu) &\leq f(0) + f'(0)\mu + \kappa_2\mu^2\\
        &= 1 - (\nu + \frac{\xi_2}{\xi_1})\mu + \kappa_2\mu^2
    \end{split}.
\end{align}
Substituting the above equation into \eqref{eq:eigvallb} yields
\begin{align}
    \begin{split}
        \lambda_{\min}(\Psi(\mu)) \geq & \frac{1}{2}\bigg(\frac{\xi_2}{\xi_1 + \xi_2} + \frac{\xi_1}{\xi_1 + \xi_2}(\frac{1}{\mu} - \nu) - \frac{1}{\mu}\frac{\xi_1}{\xi_1 + \xi_2}\big(1 - (\nu + \frac{\xi_2}{\xi_1})\mu + \kappa_2\mu^2\big)\bigg)\\
        \geq & \frac{1}{2}\big(\frac{2\xi_12}{\xi_1 + \xi_2} - 2\kappa_2\mu\big)\\
        =& \kappa_1 - \kappa_2\mu
    \end{split}.
\end{align}
\end{proof}

We are now ready to prove Lemma~\ref{lem:drift}. For any $k \geq \tau$, we have:
\begin{align*}\label{eq:driftsplit}
    &\E\left[W(\Theta_{k + 1}) - W(\Theta_k)|\Theta_{k - \tau}, X_{k - \tau}\right]\\
    =& \E\left[2\Theta_k^{\top} P(\Theta_{k + 1} - \Theta_k) + (\Theta_{k + 1} - \Theta_k)^{\top} P(\Theta_{k + 1} - \Theta_k)|\Theta_{k - \tau}, X_{k - \tau}\right]\\
    =&\E[2\Theta_k^{\top}P(\Theta_{k + 1} - \Theta_k - \epsilon^{\alpha}\bar{A}\Theta_k) + (\Theta_{k + 1} - \Theta_k)^\top P(\Theta_{k + 1} - \Theta_k)|\Theta_{k - \tau}, X_{k - \tau}]\\
    &+ 2\epsilon^{\alpha}\E[\Theta_k^\top P\bar{A}\Theta_k|\Theta_{k - \tau}, X_{k - \tau}].
\end{align*}
Using the facts that $P_u$ and $P_v$ are the solutions to their respective Lyapunov equations, we have 
\begin{align}
        \E\left[\Theta_k^\top P\bar{A}\Theta_k|\Theta_{k - \tau}, X_{k - \tau}\right]\leq  -\lambda_{\min}\E\left[\|\Theta_k\|^2|\Theta_{k - \tau}, X_{k - \tau}\right]
\end{align}
where $\lambda_{\min}$ is the smallest eigenvalue of $$\Psi=  \frac{1}{\xi_1 + \xi_2}\begin{pmatrix} {\xi_2}&-{\xi_1\xi_2}\\-{\xi_1\xi_2}&{\xi_1} \left(\epsilon^{-\alpha+\beta}-2\|P_v\bar{A}_{vv}^{-1}\bar{A}_{vu}\bar{A}_{uv}\|\right) \end{pmatrix}.$$ Combining the above equation, Lemma \ref{lem:seconddriftterm} and Lemma \ref{lem:king} with \eqref{eq:driftsplit}, we obtain
\begin{align*}
    \begin{split}
        &\E[W(\Theta_{k + 1}) - W(\Theta_k)|\Theta_{k - \tau}, X_{k - \tau}]\\
        \leq& -2\epsilon^\alpha \lambda _{\min}\E[\|\Theta_k\|^2|\Theta_{k - \tau}, X_{k - \tau}] \\
        &+ \epsilon^{\alpha}\left(\tilde{\eta}_1\tilde{\epsilon}\tau\E[\|\Theta_{k}\|^2|\Theta_{k - \tau}, X_{k - \tau}] + \tilde{\eta}_2\tilde{\epsilon}\tau\right) + 2\tilde{\epsilon}^2\gamma_{\max}\left(\E[\|\Theta_k\|^2|\Theta_{k - \tau}, X_{k - \tau
        }] + b^2_{\max}\right)\\
        \leq  &\E[\|\Theta_k\|^2|\Theta_{k - \tau}, X_{k - \tau}]\left({-2\epsilon^{\alpha}}\lambda_{\min} + \tilde{\eta}_1\epsilon^{\alpha}\tilde{\epsilon}\tau + 2\tilde{\epsilon}^2\gamma_{\max}\right)\\
        &+\epsilon^{\alpha}\tilde{\epsilon}\tau\left(\tilde{\eta}_
        2 + 4\left(1 + \|\bar{A}_{vv}^{-1}\bar{A}_{vu}\| + {\epsilon^{\beta - \alpha}}\right)\right). 
    \end{split}
\end{align*}


Applying the bound on $\lambda_{\min}$ in Lemma \ref{lem:eigenlb}, we further get 
\begin{align}
    \begin{split}
       &\E[W(\Theta_{k + 1}) - W(\Theta_k)|\Theta_{k - \tau}, X_{k - \tau}] \\
       \leq &\E[\|\Theta_k\|^2|\Theta_{k - \tau}, X_{k - \tau}]\left(-\epsilon^{\alpha}(\kappa_1 - \kappa_2\epsilon^{\alpha - \beta}) + \tilde\eta_1\epsilon^{\alpha}\tilde{\epsilon}\tau + 2\tilde{\epsilon}^2\gamma_{\max}\right)\\
        &+ \epsilon^{\alpha}\tilde{\epsilon}\tau\left(\tilde\eta_
        2 + 4\left(1 + \|\bar{A}_{vv}^{-1}\bar{A}_{vu}\| + {\epsilon^{\beta - \alpha}}{2}\right)\right)\\
        \leq& \E\left[\|\Theta_k\|^2|\Theta_{k - \tau}, X_{k - \tau}\right]\left(-\epsilon^{\alpha}\left(\frac{\kappa_1}{2} - \kappa_2\epsilon^{\alpha - \beta}\right)\right) + \epsilon^{\alpha}\tilde{\epsilon}\tau\left(\tilde{\eta}_
        2 + 4\left(1 + \|\bar{A}_{vv}^{-1}\bar{A}_{vu}\| + {\epsilon^{\beta - \alpha}}\right)\right)\\
        \leq& \E\left[\|\Theta_k\|^2|\Theta_{k - \tau}, X_{k - \tau}\right]\left(-\epsilon^{\alpha}(\frac{\kappa_1}{2} - \kappa_2\epsilon^{\alpha - \beta})\right) \\&+ \epsilon^{2\beta}\tau\big((3 + 2\|\bar{A}_{vv}^{-1}\bar{A}_{vu}\|)(\tilde\eta_2 + 4(1 + \|\bar{A}_{vv}^{-1}\bar{A}_{vu}\|)) + 6 + 4\|\bar{A}_{vv}^{-1}\bar{A}_{vu}\|\big)\\
        = &\E\left[\|\Theta_k\|^2|\Theta_{k - \tau}, X_{k - \tau}\right]\left(-\epsilon^{\alpha}\left(\frac{\kappa_1}{2} - \kappa_2\epsilon^{\alpha - \beta}\right)\right) + \epsilon^{2\beta}\tau {\eta}_2\\
        \leq& -\frac{\epsilon^{\alpha}}{\gamma_{\max}}\left(\frac{\kappa_1}{2} - \kappa_2\epsilon^{\alpha - \beta}\right) \E[W(\Theta_k)] + \epsilon^{2\beta}\tau {\eta}_2,
    \end{split}
\end{align}
where the second inequality follows from the assumption on $\epsilon,$ $\alpha$ and $\beta,$  and the third inequality follows from the fact that $\epsilon < 1$ and $\alpha > \beta$.  

\section{The Lyapunov function \eqref{eq: khalil lyap}}

The rationale behind the Laypunov function is well known to control theorists, but we present it here for the interested reader.
\begin{itemize}
    \item Setting $\epsilon=0$ in (\ref{eq: fast}) is equivalent to studying the system of ODEs in a slow time-scale where the fast time-scale dynamics are assumed to converge instantaneously. In this case, for a fixed $u$, $v$ can be written as  $v_u=-\bar{A}_{vv}^{-1}\bar{A}_{vu}u$ and substituting this expression in (\ref{eq: slow}), the ODE is purely in terms of $u.$ The first term $u^TP_u u$ in (\ref{eq: khalil lyap}) is the standard Lyapunov function used in control theory to study the stability of the resulting ODE for $u.$
    \item The second term $\left(v+\bar{A}_{vv}^{-1}\bar{A}_{vu}u\right)^\top P_v \left(v+\bar{A}_{vv}^{-1}\bar{A}_{vu}u\right)$ studies the convergence of $v$ to $v_u$ for a fixed $u$ and thus, corresponds to the stability of the fast subsystem.
\end{itemize}

\section{Experimental Setup Details}\label{app:experiment}
Following is a detailed description of reinforcement learning problems/domains we implemented\footnote{We used the OpenAI Gym implementation of these environments, available at \url{https://gym.openai.com/}.}:
\begin{enumerate}
    \item \textbf{Mountain Car: } In the basic mountain car problem, an underpowered car is positioned in a valley between two mountains on a one-dimensional track. The aim of the problem is to drive the car to the top of the mountain on the right-hand side, but the engine power available is insufficient to simply accelerate and power through to the top. Therefore, a player has to build up momentum by going back and forth between the two mountains until the car has sufficient momentum to reach its goal. The state space, action space, cost structure and initialization details for the mountain car problem are as follows:
    \begin{itemize}
        \item \textit{State Space: }  (Car Position, Car Velocity) $\in [-1.2, 0.6] \times [-0.07, 0.07]$.
        \item \textit{Action Space: } $0$, $1$ and $2$ (denoting left, no and right acceleration respectively).
        \item \textit{Cost Structure: } $+1$ cost incurred for every time step the car has not achieved its goal. $0$ cost incurred upon reaching the goal. 
        \item \textit{Initialization/Starting State: } The car's position is initialized to a random value in $[-0.6, 0.4]$. Its velocity is initialized to $0$.
    \end{itemize}
    \item \textbf{Inverted Pendulum: } In the classic inverted pendulum swing-up problem, a frictionless pendulum is hinged/pivoted at one end and the aim of the problem is to keep the pendulum in an upright position (with respect to the pivot) for as long as possible by applying a torque at the pivot point (sometimes referred to as the joint effort). The state space, action space, cost structure and initialization details for the inverted pendulum problem are as follows:
    \begin{itemize}
        \item \textit{State Space: }  ($\cos(\theta), \sin(\theta), \dot{\theta})\in [-1.0, 1.0] \times [-1.0, 1.0] \times [-8.0, 8.0]$. Here, $\theta \in [-\pi, \pi]$ denotes the angular position of the pendulum with respect to the pivot.
        \item \textit{Action Space: } Torque $\in [-2.0, 2.0]$.
        \item \textit{Cost Structure: } The equation associated with the cost function is the following:
        \begin{equation*}
            -(\theta^2 + 0.1\dot{\theta} + 0.001\times\text{torque}^2).
        \end{equation*}
        \item \textit{Initialization/Starting State: } The pendulum's angular position is initialized to a random value in $[-\pi, \pi]$. Its angular velocity is initialized to a random value $\in [-1, 1]$.
    \end{itemize}
\end{enumerate}

\section{Slope Calculations}\label{app:slopecal}

\subsection{Bounding $\E[|\psi_N|]$}
We have the following $N$ points: $\{X_i = i, Y_i = \|\Theta_{k + i} - \Theta_0\|\}_{i = 1}^N$. Using the formula for the slope of the best-fit line passing through these points, we get:
\begin{align}\label{eq:slopeformula}
    \begin{split}
        \psi_N = \frac{\sum_{i = 1}^N(X_i - \bar{X})(Y_i - \bar{Y})}{\sum_{i = 1}^N(X_i - \bar{X})^2}
    \end{split}
\end{align}
where $\bar{X} = \frac{\sum_{i = 1}^N X_i}{N} = \frac{1}{N}\sum_{i = 1}^N X_i = \frac{N + 1}{2}$ and $\bar{Y} = \frac{\sum_{i = 1}^N Y_i}{N}$. Also, note that $\sum_{i = 1}^N(X_i - \bar{X})^2 = \sum_{i = 1}^N(i - \frac{N + 1}{2})^2 = \frac{N(N - 1)(N + 1)}{12}$. Therefore, we have
\begin{align}\label{eq:slopecal}
    \begin{split}
        \E[\psi_N] = \frac{12\sum_{i = 1}^N(i - \frac{N + 1}{2})\E[(Y_i - \bar{Y})]}{N(N - 1)(N + 1)}
    \end{split}
\end{align}
From \eqref{eq:steadyineq} we know that $d - \sqrt{\frac{2K_2\mu^{2 - \lambda}}{\gamma_{\max}c}} \leq \E[Y_i] \leq d + \sqrt{\frac{2K_2\mu^{2 - \lambda}}{\gamma_{\max}c}}$. This also implies that $d - \sqrt{\frac{2K_2\mu^{2 - \lambda}}{\gamma_{\max}c}} \leq \E[\bar{Y}] \leq d + \sqrt{\frac{2K_2\mu^{2 - \lambda}}{\gamma_{\max}c}}$. Using these two facts in \eqref{eq:slopecal}
\begin{align*}
    \begin{split}
        |\E[\psi_N]| &\leq \frac{24\left(\sum_{i = 1}^{\lfloor\frac{N + 1}{2}\rfloor}(\frac{N + 1}{2} - i) + \sum_{i = \lfloor\frac{N + 1}{2}\rfloor + 1}^{N}(i - \frac{N + 1}{2})\right)\sqrt{\frac{2K_2\mu^{2 - \lambda}}{\gamma_{\max}c}}}{N(N - 1)(N + 1)}\\
        &\leq \frac{24\left(\sum_{i = 1}^{\lfloor\frac{N + 1}{2}\rfloor}(\frac{N + 1}{2} - i) + \sum_{i = 1}^{N - \lfloor\frac{N + 1}{2}\rfloor}i\right)\sqrt{\frac{2K_2\mu^{2 - \lambda}}{\gamma_{\max}c}}}{N(N - 1)(N + 1)}\\
        &\leq \frac{24\frac{(N + 1)^2}{4}\sqrt{\frac{2K_2\mu^{2 - \lambda}}{\gamma_{\max}c}}}{N(N - 1)(N + 1)} = O\left(\frac{\mu^{1 - \frac{\lambda}{2}}}{N}\right)
    \end{split}
\end{align*}
where the second inequality follows from centering the second summation term in the numerator and the last inequality follows from the fact that $\sum_{i = 1}^{\lfloor\frac{N + 1}{2}\rfloor} -i + \sum_{i = 1}^{N - \lfloor\frac{N + 1}{2}\rfloor} i \leq 0$.

\subsection{Bounding $\text{Var}(\psi_N$)}
Using \eqref{eq:slopeformula}:
\begin{align}
    \begin{split}
        \E[\psi^2_N] &= \frac{\E[\left(\sum_{i = 1}^N(X_i - \bar{X})(Y_i - \bar{Y})\right)^2]}{\left(\sum_{i = 1}^N(X_i - \bar{X})^2\right)^2}\\
        &\leq \frac{\sum_{i = 1}^N(X_i - \bar{X})^2\E[\sum_{i = 1}^N(Y_i - \bar{Y})^2]}{\left(\sum_{i = 1}^N(X_i - \bar{X})^2\right)^2}\\
        &= \frac{\E[\sum_{i = 1}^N(Y_i - \bar{Y})^2]}{\sum_{i = 1}^N(X_i - \bar{X})^2}\\
        &\leq \frac{24\E[\sum_{i = 1}^N(Y^2_i + \bar{Y}^2)]}{N(N - 1)(N + 1)}\\
        &\leq \frac{24\E[\sum_{i = 1}^N(Y^2_i + \frac{\sum_{i = 1}^NY^2_i}{N})]}{N(N - 1)(N + 1)}\\
        &\leq \frac{48\left(\frac{4K_2\mu^{2 - \lambda}}{\gamma_{\max}c} + 2\|\Theta_0 - \Theta^*\|^2 + \right)}{(N - 1)(N + 1)} = O(\frac{1}{N^2})
    \end{split}
\end{align}
where the first inequality follows from the Cauchy-Schwarz inequality, the second inequality follows from the fact that $(a + b)^2 \leq 2a^2 + 2b^2$ and $\sum_{i = 1}^N(X_i - \bar{X})^2 = \sum_{i = 1}^N(i - \frac{N + 1}{2})^2 = \frac{N(N - 1)(N + 1)}{12}$, the third inequality follows from Cauchy-Schwarz inequality and the final inequality follows from \eqref{eq:twotimeMSE} - \eqref{eq:steadystate} and the fact that $(a + b)^2 \leq 2a^2 + 2b^2$.

\end{document}